
\documentclass{article}

\usepackage{microtype}
\usepackage{booktabs} 

\usepackage{hyperref}



\usepackage[accepted]{icml2022}


\usepackage{graphicx}
\usepackage{caption, subcaption}
\usepackage{tikz}
\usetikzlibrary{positioning}
\usetikzlibrary{bayesnet}

\usepackage{amsmath}
\usepackage{amssymb}
\usepackage{amsthm}
\usepackage{bm}
\usepackage{mathtools}

\usepackage[capitalize,noabbrev]{cleveref}


\theoremstyle{definition}

\theoremstyle{plain}
\newtheorem{theorem}{Theorem}
\newtheorem{lemma}{Lemma}
\newtheorem{corollary}{Corollary}

\newtheorem{claim}{Claim}
\newtheorem*{claim*}{Claim}
\newtheorem*{theorem*}{Theorem}

\usepackage[textsize=tiny]{todonotes}

\icmltitlerunning{On the Convergence of the Shapley Value in Parametric Bayesian Learning Games}

\newcommand{\V}{\bm{\mathcal{V}}}
\newcommand{\KL}[2]{\textnormal{KL}(#1\|#2)}
\newcommand{\KLE}[2]{\textnormal{KL}^+(#1\|#2)}
\newcommand{\TV}[2]{\bigl\| #1 - #2 \bigr\|_{\textnormal{TV}}}
\newcommand{\goinp}{\overset{\textnormal{p}}{\longrightarrow}}

\begin{document}

\twocolumn[
\icmltitle{On the Convergence of the Shapley Value in\\Parametric Bayesian Learning Games}




\begin{icmlauthorlist}
\icmlauthor{Lucas Agussurja}{soc}
\icmlauthor{Xinyi Xu}{soc,astar}
\icmlauthor{Bryan Kian Hsiang Low}{soc}
\end{icmlauthorlist}
\icmlaffiliation{soc}{Department of Computer Science, National University of Singapore, Singapore.}
\icmlaffiliation{astar}{Institute for Infocomm Research, A$^*$STAR, Singapore}

\icmlcorrespondingauthor{Bryan Kian Hsiang Low}{lowkh@comp.nus.edu.sg}

\icmlkeywords{Bayesian learning, Shapley value, Finite sample convergence}

\vskip 0.3in
]


\printAffiliationsAndNotice{}  

\begin{abstract}
Measuring contributions is a classical problem in cooperative game theory where the Shapley value is the most well-known solution concept. In this paper, we establish the convergence property of the Shapley value in \textit{parametric Bayesian learning games} where players perform a Bayesian inference using their combined data, and the posterior-prior KL divergence is used as the characteristic function. We show that for any two players, under some regularity conditions, their difference in Shapley value converges in probability to the difference in Shapley value of a limiting game whose characteristic function is proportional to the log-determinant of the joint Fisher information. As an application, we present an online collaborative learning framework that is asymptotically Shapley-fair. Our result enables this to be achieved without any costly computations of posterior-prior KL divergences. Only a consistent estimator of the Fisher information is needed. The effectiveness of our framework is demonstrated with experiments using real-world data. 

\end{abstract}

\section{Introduction}

Recent significant increase in computing power has enabled the training of high-capacity machine learning models (e.g.,~deep neural networks) which results in an unprecedented level of performances in many domains. A key ingredient of this success is the huge quantity of data that is used to train these models. Going forward, as models and tasks grow in complexity, the demand for more data will only rise in intensity and it will become necessary to utilize data from multiple sources and across different organizations. A benefit of data sharing is the ability to achieve a higher convergence rate. It has been shown that when collaboration is possible, an exponential improvement can be achieved in terms of sample complexity \cite{blum2017, chen2018, nguyen2018}.

When learning is done using datasets from multiple sources, a natural question arises: \emph{What are the relative contributions of individual datasets to the resulting model?} As examples, consider the following two scenarios: (a) In a hospital, a new study is to be conducted on the general population and datasets are collected from multiple clinics within the hospital. The datasets contain bias due to the specializations of the clinics.
Datasets from pediatricians, for example, will contain data for patients under $12$ years old, while datasets from rheumatologists will be gender-biased since women are (biologically) more likely to develop rheumatoid-type conditions. It is therefore important to know the influences that these datasets have on the outcome. (b) A group of companies have decided to jointly develop a machine learning application by pooling their data together. The application will then be provided as a service to customers (e.g., on a pay-per-query basis). After deployment, the group is then faced with the problem of distributing the generated revenue from the application. Though resolving these problems involves a number of non-technical (e.g., social and business) considerations, a quantitative measure of the datasets' contributions to the learning outcome would be crucial.

The problem of attribution (cost/profit sharing) is a classical problem in cooperative game theory and the two most well-known solution concepts are the Shapley \cite{shapley1953} and Banzhaf \cite{banzhaf1965} values. Both concepts are based on the marginal contribution, i.e., the increase in a coalition's value when a player joins it. These values of coalitions are given by the characteristic (valuation) function which maps each given coalition to a real number. Both the Shapley and Banzhaf values quantify a player's contribution with a weighted sum of its marginal contributions to all possible coalitions, but they differ in the weights used.
What makes the Shapley value appealing is its uniqueness in satisfying the desirable properties of symmetry, null player, linearity, and efficiency simultaneously. The first two properties are often used as criteria for fairness. The Banzhaf value satisfies these properties except efficiency. 

The use of Shapley value has gained traction in the machine learning community \cite{sim2020,rachxinyi,tay2021,zhaoxuan22,xu2021gradient,xu2021vol}. Applications that are related to the present work include quantifying the importance of features \cite{covert2020,lundberg2017}  and sets of training data \cite{ghorbani2019}. In these works, the characteristic function is defined as the performance of the trained models on some test set. However, the use of a test set can be infeasible. Consider the cases when the players' data, albeit can be explained by the same underlying hypothesis, may not come from the same sample space or may be generative in nature. In these cases, a Bayesian approach is a better fit. Instead of agreeing on a common test set, the players agree on a common prior. Joint Bayesian inferences are then performed and the value of a coalition is the information gain of their joint posterior from the prior. To our knowledge, the use and analysis of the Shapley value in this setting is still unexplored. So, the first contribution of our work here is to bridge this gap.

Specifically, we study the asymptotic properties of the Shapley/Banzhaf value in parametric Bayesian learning games. In these games, players come together to perform  Bayesian inference using their combined datasets. We assume that there is a true hypothesis (parameter) and a player's dataset is generated from a distribution that is indexed by the true parameter. In this work, the players' distributions need not be the same. Given a commonly agreed prior, the characteristic function is defined as the posterior-prior KL divergence. In our main result, we show that as the size of datasets tends to infinity, the difference in Shapley value converges in probability to the difference in Shapley value of a limiting game whose characteristic function is proportional to the log-determinant of the joint Fisher information. Fisher information is classically interpreted as a measure of information that a dataset carries about the underlying hypothesis. However, supposing one wants to use it as the characteristic function, it is unclear what the suitable statistic is. Providing this link is the second contribution of this work.

As the third contribution, we present a $2$-player online collaborative learning framework that is asymptotically Shapley-fair. The key idea underlying our framework is to vary the rates at which players' data are received such that the determinants of their Fisher information are the same in the limit. Applying the main result, we show that the difference in Shapley value converges in probability to zero. We empirically verify this on real-world data for Bayesian linear regression and mean estimation in a learned latent space. Moreover, we extend our framework to multiple players and empirically show that the convergence can also be achieved approximately with more than $2$ players.

\section{Parametric Bayesian Learning Game} \label{sec:bayesian-learning-game}

In this section, we will define a parametric Bayesian learning game and state the assumptions \textbf{A1} to \textbf{A4} on the players' data generation model.
We will first describe the players' data generation model. In this work, we assume that there is a true parameter (hypothesis) $\theta^*$ that all the players are interested in finding. In the parametric setting, $\theta^*$ lies in the parameter space $\Theta$ which is a subset of the $k$-dimensional Euclidean space. Each player $i$ is associated with a family of distributions $\{F_{i,\theta}\}_{\theta \in \Theta}$ and  observes the random variables
\[
\textbf{X}_i^m := (\textbf{X}_{i1}, \dots, \textbf{X}_{im}) \quad\text{where}\quad \textbf{X}_{i1}, \dots, \textbf{X}_{im} \overset{\text{i.i.d.}}{\sim} F_{i,\theta^*}
\]
and $m$ denotes the number of data points. We will now state the first three assumptions with regard to the players' data generation model.

(\textbf{A1}) The density of $F_{i,\theta}$ is twice differentiable in $\theta$.

(\textbf{A2}) Besides the independence of a player's data points, the random variables $\textbf{Y}_{i,\theta}\sim F_{i,\theta}$ of player $i$ and $\textbf{Y}_{j,\theta}\sim F_{j,\theta}$ of player $j$ are conditionally independent given $\theta\in\Theta$.

The third assumption states that the data generated from the true parameter can be distinguished from the ones generated from any other parameter. This is a common assumption in statistical analysis to guarantee the existence of a consistent estimator (e.g.,~maximum likelihood estimator). If the assumption is violated, then there exists another parameter $\theta\in\Theta$ that is observationally equivalent to $\theta^*$. Consequently, no algorithm will be able to distinguish between the two, even with an infinite quantity of data. Sometimes, the stronger assumption of identifiability is given, i.e., $F_{i,\theta} \neq F_{i,\theta'}$ (w.r.t.~a probability metric) whenever $\theta\neq\theta'$. Identifiability implies the following weaker condition:

(\textbf{A3}) Let $\textbf{Y}_{i,\theta}^m$ denote a sample (of size $m$) drawn independently from $F_{i,\theta}$. For each player $i$, there exists an algorithm (or, in statistical term, a test) $D_{i,\theta^*}$ that accepts  $\textbf{Y}_{i,\theta}^m$ and $\varepsilon$ as inputs and outputs $0$ or $1$ s.t.~for any $\varepsilon > 0$,
\[
\mathbb{E}\left[D_{i,\theta^*}(\textbf{X}_i^m, \varepsilon)\right] \longrightarrow 0 \quad\quad\textnormal{and}
\]
\[ 
\begin{array}{c}
\sup_{\theta:\|\theta-\theta^*\|\ge\varepsilon} \mathbb{E}\left[ 1 - D_{i,\theta^*}(\textbf{Y}_{i,\theta}^m, \varepsilon) \right] \longrightarrow 0\ .
\end{array}
\]
An algorithm $D_{i,\theta^*}$ with the above property is called a uniformly consistent distinguisher.

\begin{figure*}
	\centering
	\begin{subfigure}{0.4\textwidth}
		\centering
		\resizebox{\textwidth}{!}{%
			\tikz{%
				\node[det, scale=1.75] (theta*) {$\theta^*$}; %
				\node[obs, scale=1.5, above=of theta*] (X) {$\bm{A}$}; %
				\node[obs, scale=1.5, right=of X] (Y) {$\bm{B}$}; %
				\node[obs, scale=1.5, below left=2cm of theta*] (Z) {$\bm{C}$}; %
				\node[det, scale=1.75, right=of theta*] (sigma) {$\sigma$}; %
				\node[obs, scale=1.5, below=of sigma] (W) {$\bm{E}$}; %
				\node[obs, scale=1.5, right=of W] (U) {$\bm{F}$}; %
				\plate {plate1} {(X) (Y)} {$m$}; %
				\plate {plate2} {(Z)} {$m$}; %
				\plate {plate3} {(U) (W)} {$m$}; %
				\edge {theta*, X} {Y}; %
				\edge {theta*} {Z}; %
				\edge {theta*, U, sigma} {W}; %
				\node[above=0.15cm of X, xshift=1cm] (player1) {player 1}; %
				\node[below=0.9cm of Z, centered] (player2) {player 2}; %
				\node[below=0.5cm of W, xshift=1cm] (player3) {player 3}; %
			}%
		}%
		\caption{An example of a data generation model}
		\label{fig:example:data_generation_model}
	\end{subfigure}
	\begin{subfigure}{0.58\textwidth}
		\centering
		\includegraphics[width=9.4cm]{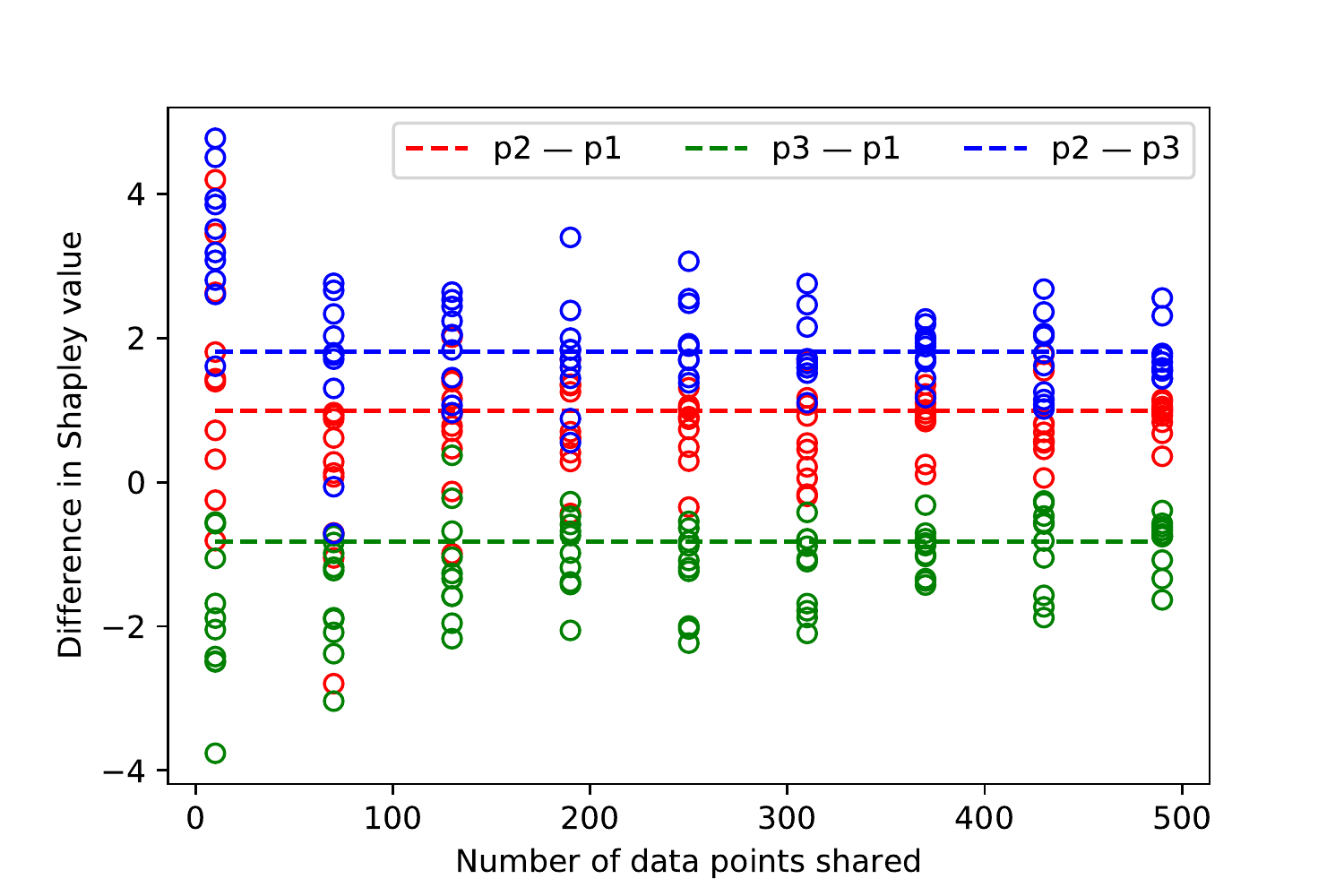}
		\caption{Convergence of the differences in Shapley value}
		\label{fig:example:shapley_value}
	\end{subfigure}
	\caption{An illustrative example of a $3$-player parametric Bayesian learning game.}
	\label{fig:example}
\end{figure*}

As an example, consider the graphical model shown in Fig.~\ref{fig:example:data_generation_model} for three players. In the model, each player observes a disjoint subset of the observable random variables. Each subset depends on the hidden variable $\theta^*$ which is the parameter of interest. The other hidden variable $\sigma$ affects only player $3$. As an illustration, $\theta^*$ may represent, say, the fraction of a population  infected in an epidemic and the three subsets represent different test results (e.g., CT scans, swabs with varying accuracies) performed independently on random samples. In our example, $\theta^*$ is a vector of four real numbers. Player $1$'s data is a linear model (weighted by $\theta^*$) with known noise, player $2$'s data are noisy observations of $\theta^*$, and player $3$' data is a linear model with unknown noise parameter $\sigma$. Note that a player's data may depend on unknowns other than $\theta^*$ although we are interested only in the marginal posterior w.r.t.~$\theta^*$.

Next, we will describe how to perform joint inference. A coalition $S$ is a subset of players who may decide to cooperate to perform inference using their combined data points. With a slight abuse of notations,  $\textbf{X}_S^m := (\textbf{X}_i^m)_{i\in S}$ and $\textbf{X}^m$ denote the combined data points of players in $S$ and that of all the players in the grand coalition, respectively. Bayesian inference is then performed using the joint likelihood $\mathcal{L}_S(\cdot;\textbf{X}_S^m)$ and a commonly agreed prior $\Pi$ whose density is denoted by $\pi$. Then, the joint posterior $\bm{P}_S^m$ is of the density
\[
\begin{array}{rl}
\bm{p}_S^m(\theta) =&\hspace{-2.4mm} \displaystyle \frac{\mathcal{L}_S(\theta; \textbf{X}_S^m)\ \pi(\theta)}{\displaystyle{\int_{\theta'\in\Theta} \mathcal{L}_S(\theta';\textbf{X}_S^m)\ \text{d}\Pi(\theta')}}\ , \vspace{1mm}\\
\mathcal{L}_S(\theta;\textbf{X}_S^m) =&\hspace{-2.4mm} \textstyle\prod_{i\in S}\mathcal{L}_i(\theta;\textbf{X}_i^m)
\end{array}
\]
s.t.~the second equality is due to assumption \textbf{A2}. Note that both the likelihood and the posterior density are random functions whose values depend on the realizations of $\textbf{X}_S^m$. This is also the case for the characteristic function and the Shapley/Banzhaf value defined below. The prior $\Pi$ is assumed to have the following properties: 

(\textbf{A4}) The prior probability measure has a compact support, is absolutely continuous around a neighborhood of $\theta^*$, and has a positive continuous density at $\theta^*$. 

We will now define the characteristic function $\bm{\mathcal{V}}^m$ of the game. Since the function measures the value of a coalition, a natural candidate therefore is to use the information gain on the true parameter via the standard KL divergence of the joint posterior from the common prior: 
\begin{equation}
\bm{\mathcal{V}}^m(S) := \KL{\bm{P}_S^m}{\Pi} := \int_{\Theta}\log\left({\text{d}\bm{P}_S^m}/{\text{d}\Pi}\right) \text{d}\bm{P}_S^m
\label{gatcha}
\end{equation}
which is the expectation (w.r.t.~the posterior) of the log of the Radon-Nikodym derivative of the posterior over the prior. In order for the divergence to be well-defined, $\bm{P}_S^m$ has to be absolutely continuous w.r.t.~$\Pi$. This is always satisfied since in Bayesian inference, any set with zero prior measure will also have zero posterior measure. By definition, the value of the empty coalition is zero. Since a cooperative game is completely defined by its characteristic function, we will use the two terms interchangeably.

As mentioned previously, in cooperative game theory, the two most commonly used measures of contributions are the Shapley and Banzhaf values. Both values are defined as a weighted sum of a player's marginal contributions to all possible coalitions of other players, albeit with different weights. That is, for each player $i$, both values have the form
\[
\textstyle
\phi(i;\bm{\mathcal{V}}^m) :=\sum_{S\subseteq N\setminus\{i\}}  w_S\left[ \bm{\mathcal{V}}^m(S\cup\{i\}) - \bm{\mathcal{V}}^m(S) \right]
\]
where $N$ is the grand coalition and the weights $w_S$ (associated with coalition $S$) for the Shapley and Banzhaf values are, respectively,
\[
|S|!(|N|-|S|-1)!/|N|! \quad\text{and}\quad {1}/{2^{|N|-1}}\ .
\]
The linearity/additivity property holds for both values and is central to deriving the main result of this paper: Let $V$ be a cooperative game. If $V$ can be decomposed additively (i.e., $V = V_1 + \ldots + V_n$), then $\phi(i; V) = \phi(i; V_1) + \ldots + \phi(i, V_n)$.

\section{Convergence Results}

The convergence results are divided into three parts: (a) w.r.t.~the joint posteriors, (b) the characteristic function, and (c) the Shapley/Banzhaf value. For each result, we give its interpretation and the idea of its proof in the main paper; their full proofs are in Appendix~\ref{goodness}. We start with a result on the convergence of the joint posterior $\bm{P}_S^m$ for any nonempty coalition $S$, which generalizes the Bernstein-von Mises theorem to that of joint Bayesian inference. The proof therefore involves checking that the required conditions still hold for the joint inference; see the works of~\citet{lecam2000,vaart1998} for a complete proof of the Bernstein-von Mises theorem.

\begin{theorem}[\textbf{Bernstein-von Mises}]\label{thm:BvM}
	If the regularity conditions \textnormal{\textbf{A1}} to \textnormal{\textbf{A4}} hold, then for any nonempty coalition $S$,
	\[
	\left\lVert \bm{P}_S^m - \mathcal{N}\left( \bm{\hat{\theta}}^m_S, {m}^{-1}\mathcal{I}^{-1}_S \right) \right\rVert_{\textnormal{TV}} \overset{\textnormal{p}}{\longrightarrow} 0
	\]
	where $\bm{\hat{\theta}}_S^m$ is the maximum likelihood estimate w.r.t.~the joint likelihood $\mathcal{L}_S(\,\cdot\,;\textnormal{\textbf{X}}_S^m)$, $\mathcal{I}_S$ is the Fisher information w.r.t.~the same likelihood, and $\lVert\cdot\rVert_{\textnormal{TV}}$ is the total variation distance.
\end{theorem}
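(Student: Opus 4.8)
The plan is to reduce the claim to the classical Bernstein-von Mises theorem (as in \citet{vaart1998}, Theorem 10.1) by verifying its hypotheses for the joint experiment of coalition $S$. The three ingredients required by that theorem are (i) local asymptotic normality (LAN) of the model at $\theta^*$ with nonsingular Fisher information, (ii) the existence of a uniformly consistent sequence of tests separating $\theta^*$ from its $\varepsilon$-complement, and (iii) a prior that is absolutely continuous with a positive continuous density at $\theta^*$. Ingredient (iii) is exactly assumption \textbf{A4}, so the work lies in establishing (i) and (ii) for the combined likelihood $\mathcal{L}_S$.

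For the LAN property (i), I would exploit that conditional independence (\textbf{A2}) makes the joint log-likelihood additive, $\log\mathcal{L}_S(\theta;\textbf{X}_S^m) = \sum_{i\in S}\log\mathcal{L}_i(\theta;\textbf{X}_i^m)$. Assumption \textbf{A1} (twice differentiability of each density) yields differentiability in quadratic mean of each family $\{F_{i,\theta}\}$, hence a local quadratic expansion of each summand with per-observation Fisher information $\mathcal{I}_i$. Summing these expansions over $i\in S$ and over the $m$ data points, the cross terms vanish in probability by independence and the remainders are $o_{\textnormal{p}}$, so the joint experiment is LAN with total Fisher information $m\mathcal{I}_S$ where $\mathcal{I}_S = \sum_{i\in S}\mathcal{I}_i$. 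This simultaneously identifies the limiting covariance $(m\mathcal{I}_S)^{-1} = m^{-1}\mathcal{I}_S^{-1}$ and, via the resulting linear expansion of the score, the asymptotic efficiency of the joint MLE $\bm{\hat\theta}_S^m$, which justifies centering the Gaussian at $\bm{\hat\theta}_S^m$ rather than at the score-based shift appearing in the textbook statement.

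For the testing condition (ii), I would build a joint distinguisher from the per-player ones guaranteed by \textbf{A3}. Defining $D_{S,\theta^*} := \max_{i\in S} D_{i,\theta^*}$ (reject $\theta^*$ if any player's test does), a union bound gives type-I error $\mathbb{E}[D_{S,\theta^*}(\textbf{X}_S^m,\varepsilon)] \le \sum_{i\in S}\mathbb{E}[D_{i,\theta^*}(\textbf{X}_i^m,\varepsilon)] \to 0$, while conditional independence (\textbf{A2}) factorizes the type-II error as $\mathbb{E}[1 - D_{S,\theta^*}] = \prod_{i\in S}\mathbb{E}[1-D_{i,\theta^*}]$, each factor of which tends to $0$ uniformly over $\|\theta - \theta^*\|\ge\varepsilon$. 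Hence the coalition admits a uniformly consistent distinguisher, and invoking the Bernstein-von Mises theorem with (i)--(iii) in place delivers convergence in total variation, in probability, to $\mathcal{N}(\bm{\hat\theta}_S^m, m^{-1}\mathcal{I}_S^{-1})$.

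I expect the main obstacle to be the LAN verification in step (i): because the players may draw from \emph{distinct} families $\{F_{i,\theta}\}$, the standard i.i.d.~formulation of differentiability in quadratic mean (see also \citet{lecam2000}) does not apply verbatim, and I would need its independent-but-not-identically-distributed version, checking that a Lindeberg-type condition on the summed scores holds so that both the quadratic expansion and the asymptotic normality of the aggregated score survive the heterogeneity across players. The testing step (ii), by contrast, is routine once the per-player distinguishers of \textbf{A3} are combined as above.
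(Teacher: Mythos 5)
Your proposal is correct and follows the same overall route as the paper: reduce to the classical Bernstein--von Mises theorem by verifying, for the coalition's joint model, a smoothness condition, the existence of a uniformly consistent test, and the prior conditions of \textbf{A4}, together with the additivity $\mathcal{I}_S=\sum_{i\in S}\mathcal{I}_i$. The differences are tactical, and in each case the paper takes a lighter path. For the testing condition, the paper simply reuses a \emph{single} player's distinguisher from \textbf{A3} as a test for the combined data (a test may ignore the other players' samples), so your max-test with the union bound and the factorization of the type-II error via \textbf{A2}, while valid, is more than is needed. For smoothness, the paper invokes the stronger but simpler condition that products of twice-differentiable densities are twice differentiable (\textbf{A1}, \textbf{A2}), explicitly declining the differentiability-in-quadratic-mean route you take; your DQM/LAN argument buys generality the assumptions do not require. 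Most importantly, the obstacle you flag as the main difficulty is illusory: although the players draw from distinct families, the coalition's data regroup by index $j$ so that $\textbf{X}_S^m$ consists of $m$ i.i.d.\ draws $(\textbf{X}_{ij})_{i\in S}$ from the single product family $\bigotimes_{i\in S}F_{i,\theta}$ --- this is exactly how the paper defines $\mathcal{I}_S$, via one data point from each player. Since $|S|$ is fixed, the score of the product family is a finite sum of the players' scores, DQM of a product follows from DQM of its factors, and the central limit theorem runs over the i.i.d.\ index $j$, so no i.n.i.d.\ LAN theory or Lindeberg-type condition is required; your fallback would work but solves a problem that the product-experiment viewpoint dissolves. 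Your remark on centering at the MLE $\bm{\hat\theta}_S^m$ rather than at the score-based shift of \citet{vaart1998} is a genuine detail the paper glosses over, and handling it through asymptotic efficiency of the joint MLE as you propose is the right fix.
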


Recall that the Fisher information $\mathcal{I}_S$ is the expected value of the squared gradient of the log-likelihood $\log\mathcal{L}_S(\cdot;\textbf{X}_S^m)$ 
and is thus a function of $\theta$. In this work, the Fisher information is always taken to be at the true parameter $\theta^*$:
\[
\mathcal{I}_S := \mathbb{E}\!\left[\! \left(\frac{\partial}{\partial\theta}\log\mathcal{L}_S (\theta^*;\textbf{X}_S^1)\right)\!\! \left(\frac{\partial}{\partial\theta}\log\mathcal{L}_S (\theta^*;\textbf{X}_S^1)\right)^\top \right]
\]
which is not a random variable and does not depend on $m$ since the expectation is w.r.t.~a single data point from each player in $S$. However, it depends on $S$ since different coalitions have different joint likelihoods. Since the expectation of the gradient is zero at $\theta^*$, the Fisher information is also the variance of the gradient of the log-likelihood at $\theta^*$. If the Fisher information is large, then the log-likelihood is steep around $\theta^*$ and hence makes it easy to identify $\theta^*$, and vice versa. This is why Fisher information can be interpreted as a measure of information that a random variable contains about the underlying true parameter. Note also that it is a positive semidefinite matrix.

Theorem \ref{thm:BvM} states that the joint posterior $\bm{P}_S^m$ can be approximated by the distribution 
\begin{equation}
\bm{\hat{P}}_S^m := \mathcal{N}\left(\bm{\hat{\theta}}_S^m,{m}^{-1}\mathcal{I}_S^{-1}\right)
\label{crappybird}
\end{equation}
s.t.~the accuracy improves with an increasing $m$. Furthermore, since $\bm{\hat{\theta}}_S^m \goinp \theta^*$ under conditions \textbf{A1} to \textbf{A3}, $\bm{\hat{P}}_S^m$ (and thus $\bm{P}_S^m$) converges to the degenerate normal distribution that assigns an infinite density to the true parameter. The KL divergence (i.e., characteristic function~\eqref{gatcha}) therefore goes to infinity for any nonempty coalition. This, in turn, causes the Shapley value of any player to go to infinity.

We will now describe the convergence result w.r.t.~the characteristic function. The idea is to approximate the value of $\bm{\mathcal{V}}^m(S)$ by the KL divergence of the approximated joint posterior from the prior. In general, however, the approximation (a normal distribution) is not absolutely continuous w.r.t.~the prior, so the divergence may not be well-defined. We will instead use an extended version of the KL divergence defined as follows: Let $P$ and $R$ be two probability measures defined on a measurable space $(\mathcal{X}, \Omega)$. Then, the \emph{extended KL divergence} of $P$ from $R$ is defined as
\[
\KLE{P}{R} := \int_{\mathcal{X}} \log\left({\text{d}P|_{R}}/{\text{d}R}\right) \text{d}P|_R
\]
where $P|_R$ is the absolutely continuous component of the Lebesgue's decomposition of $P$ w.r.t.~$R$. Note that the measure $P|_R$ is not necessarily probabilistic, i.e., $P|_R(\mathcal{X})$ need not be one. Informally, instead of taking the integral over the whole support of $P$, only the parts that intersect with the support of $R$ are used. Using the extended KL divergence, we can now state the next result:
\begin{lemma}\label{lem:value_convergence}
	Let $\V^m$ be a parametric Bayesian learning game. Under regularity conditions  \textnormal{\textbf{A1}} to \textnormal{\textbf{A4}}, for any nonempty $S$,
	\[
	\Bigl|\, \bm{\mathcal{V}}^m(S) - \KLE{\bm{\hat{P}}^m_S}{\Pi} \,\Bigr| \overset{\textnormal{p}}{\longrightarrow} 0
	\]
	where $\bm{\hat{P}}^m_S$ is previously defined in~\eqref{crappybird}.
\end{lemma}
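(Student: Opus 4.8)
The plan is to reduce the claim to a comparison of two explicit density integrals on the support of the prior and then to isolate the single analytic difficulty, namely upgrading the total-variation control of \Cref{thm:BvM} to control of the (unbounded, $\log m$-sized) logarithmic integrand. Let $K$ denote the compact support of $\Pi$; by \textbf{A4} the point $\theta^*$ lies in its interior, where $\pi$ is positive and continuous, and the posterior $\bm{P}_S^m$ is automatically supported on $K$ since it is absolutely continuous w.r.t.\ $\Pi$. Writing the posterior and Gaussian densities as $\bm{p}_S^m$ and $\hat{p}_S^m$, I would first record the algebraic identity
\[
\bm{\mathcal{V}}^m(S) - \KLE{\bm{\hat{P}}_S^m}{\Pi}
= \int_K \bm{p}_S^m \log\frac{\bm{p}_S^m}{\hat{p}_S^m}\,\mathrm{d}\theta
+ \int_K \bigl(\bm{p}_S^m - \hat{p}_S^m\bigr)\log\frac{\hat{p}_S^m}{\pi}\,\mathrm{d}\theta ,
\]
so that it suffices to prove that each summand converges to $0$ in probability.

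The second summand is the easier one. Substituting the local coordinate $\theta = \bm{\hat{\theta}}_S^m + m^{-1/2}u$, the integrand $\log(\hat{p}_S^m/\pi)$ equals $\tfrac12\log\det(m\mathcal{I}_S) - \tfrac{k}{2}\log(2\pi) - \tfrac12 u^\top\mathcal{I}_S u - \log\pi(\theta)$. The part that is constant in $\theta$ (including the divergent $\tfrac{k}{2}\log m$) integrates against $\bm{p}_S^m - \hat{p}_S^m$ to $o_p(1)$, because $\int_K \bm{p}_S^m = 1$ while $\int_K \hat{p}_S^m = 1 - \bm{\hat{P}}_S^m(K^c)$ with the mass deficit $\bm{\hat{P}}_S^m(K^c)$ exponentially small in $m$ (Gaussian tail, $\theta^*$ interior), so that $\log m$ times this deficit still vanishes. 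The prior term contributes $o_p(1)$ since $\log\pi(\theta)\to\log\pi(\theta^*)$ uniformly on a shrinking neighborhood of $\theta^*$ and the two measures agree there up to a total variation that vanishes by \Cref{thm:BvM}. The only genuinely unbounded piece is the quadratic $\tfrac12 u^\top\mathcal{I}_S u = \tfrac{m}{2}(\theta-\bm{\hat{\theta}}_S^m)^\top\mathcal{I}_S(\theta-\bm{\hat{\theta}}_S^m)$, whose $\hat{p}_S^m$-integral is exactly $k/2$; I would need its $\bm{p}_S^m$-integral to converge to the same value.

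This last point is the crux, and it also governs the first summand, the posterior-to-Gaussian divergence $\KL{\bm{P}_S^m}{\bm{\hat{P}}_S^m}$. Since total-variation convergence bounds neither a KL divergence nor a second moment, the hard part is to strengthen \Cref{thm:BvM}: I would establish that the rescaled posterior $\sqrt{m}\,(\theta - \bm{\hat{\theta}}_S^m)$ converges to $\mathcal{N}(0,\mathcal{I}_S^{-1})$ together with its second moments, equivalently that the family $\{\, m\lVert\theta-\bm{\hat{\theta}}_S^m\rVert^2 \,\}$ is uniformly integrable under $\bm{P}_S^m$. The ingredients are the second-order (local asymptotic normality) expansion of $\log\mathcal{L}_S$ about $\bm{\hat{\theta}}_S^m$ on a $\sqrt{m}$-neighborhood, which makes the local log-posterior quadratic and forces $\bm{p}_S^m/\hat{p}_S^m \to 1$ there, together with the testing/identifiability assumption \textbf{A3} and the compactness in \textbf{A4}, which force the complement of that neighborhood to carry negligible posterior mass (and negligible integral contribution) at an exponential rate. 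Once this uniform-integrability and tail estimate is in place, the quadratic integral under $\bm{p}_S^m$ converges to $k/2$ so the second summand vanishes, and the same local expansion with the tail bound yields $\KL{\bm{P}_S^m}{\bm{\hat{P}}_S^m}\goinp 0$, completing the argument. I expect this uniform-integrability/tail step to be the main obstacle, since everything else is either algebraic or a direct consequence of \Cref{thm:BvM}.
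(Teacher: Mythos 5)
Your algebraic reduction is correct, and you have correctly diagnosed where the difficulty sits: total-variation control from Theorem~\ref{thm:BvM} bounds neither the first summand $\KL{\bm{P}_S^m}{\bm{\hat{P}}_S^m}$ nor the posterior integral of the quadratic term $\frac{m}{2}(\theta-\bm{\hat{\theta}}_S^m)^\top\mathcal{I}_S(\theta-\bm{\hat{\theta}}_S^m)$ in the second summand. But that crux is precisely where your proposal stops being a proof and becomes a plan, and the plan is a genuine gap: the uniform integrability of $m\lVert\theta-\bm{\hat{\theta}}_S^m\rVert^2$ under $\bm{P}_S^m$ (equivalently, a KL-strengthened Bernstein--von Mises) is strictly stronger than Theorem~\ref{thm:BvM} and does not follow from \textbf{A1}--\textbf{A4} as stated. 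Your two named ingredients each need nontrivial additional regularity: a second-order expansion of $\log\mathcal{L}_S$ with remainder controlled \emph{uniformly} over a $\sqrt{m}$-neighborhood requires uniform (e.g., dominated, locally continuous) control of second or third derivatives, which the pointwise twice-differentiability of \textbf{A1} does not supply; and an \emph{exponential} posterior tail bound outside that neighborhood requires exponentially powerful tests, whereas \textbf{A3} only provides uniformly consistent ones (the upgrade is possible for i.i.d.\ data via a majority-vote construction, but is itself a separate lemma you would have to prove). You also need the exponential rate, not mere vanishing, of the mass deficit $\bm{\hat{P}}_S^m(\Theta_\Pi^c)$ to beat the $\tfrac{k}{2}\log m$ factor, which again goes beyond what the paper's Claim~\ref{claim:residuals} establishes.

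The missing idea is the paper's device for avoiding KL-type control of $\bm{P}_S^m$ versus $\bm{\hat{P}}_S^m$ altogether. The paper proves a triangle-like inequality
\[
\Bigl|\, \KL{F}{K} - \KLE{G}{K} \,\Bigr| \le c\cdot\TV{F}{G} + \Bigl| H(F) - H(G) \Bigr| + \left| \int_{\Theta_K^c} \log\left(\frac{\text{d}G}{\text{d}\lambda}\right) \text{d}G \right|
\]
and then bounds the entropy gap by $\delta\log\bigl(\lambda(\Theta_F)/\delta\bigr)$ with $\delta\le 2\TV{F}{G}$, by rescaling both densities into $[0,1/2]$ and applying the inequality $|x\log x - y\log y|\le|x-y|\log(1/|x-y|)$ (a continuous adaptation of Theorem 17.3.3 of Cover and Thomas, following \citet{ghourchian2017}). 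Since $t\mapsto t\log(1/t)$ vanishes as $t\to 0^+$, every term is then driven to zero by exactly the total-variation convergence that Theorem~\ref{thm:BvM} already provides, with the residual integrals over $\Theta_\Pi^c$ handled by dominated convergence (Claim~\ref{claim:residuals}). In short, the paper trades your moment/KL strengthening of Bernstein--von Mises for an elementary entropy-comparison inequality; your route, if completed, would yield a stronger conclusion than Lemma~\ref{lem:value_convergence} requires, but only at the cost of regularity hypotheses and tail lemmas that the paper neither assumes nor needs.
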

Before proceeding, we will first show how this seemingly obvious result may not be trivial to prove. Suppose that for now, in place of KL divergence, we use the Hellinger distance $h$ instead. The reason for this choice is that it satisfies the triangle inequality and is upper bounded by the total variation distance. So, for every $m$,
\[
\begin{array}{lcl}
	\left| h(\bm{P}^m_S, \Pi) - h(\bm{\hat{P}}^m_S, \Pi) \right| & \le & h(\bm{P}^m_S, \bm{\hat{P}}^m_S) \\
	& \le & \left\lVert\bm{P}^m_S - \bm{\hat{P}}^m_S\right\rVert^{1/2}_{\textnormal{TV}} \ .
\end{array}
\]
It follows that for any $\varepsilon > 0$,
\[
\begin{array}{l}
	\Pr\left[\left| h(\bm{P}^m_S, \Pi) - h(\bm{\hat{P}}^m_S, \Pi) \right| > \varepsilon\right]  \vspace{0.5mm}\\
	\le\Pr\left[\left\lVert\bm{P}^m_S - \bm{\hat{P}}^m_S\right\rVert_{\textnormal{TV}} > \varepsilon^2 \right],
\end{array}
\]
which establishes the required convergence in terms of the convergence in total variation distance. Since the latter is true given Theorem~\ref{thm:BvM}, so is the former. On the other hand, the KL divergence neither satisfies the triangle inequality nor is upper bounded by the total variation distance. Instead, we use a triangle-like inequality which establishes its convergence in terms of the convergence in total variation distance, the difference in entropy, and a residual term. The latter two are then shown to converge in probability to zero.

The above result allows us to obtain a closed-form approximation of $\V^m(S)$ for specific priors by analyzing the extended KL divergence. If the latter converges in probability to some fixed function, then so is the former. Next, we present such results for both uniform and normal priors:
\begin{corollary}\label{cor:uniform_prior}
	Let \textnormal{$\bm{\mathcal{V}}^m$} be a parametric Bayesian learning game s.t.~the regularity conditions \textnormal{\textbf{A1}} to \textnormal{\textbf{A4}} hold. If the commonly agreed prior is a uniform distribution $U$ with support $\Theta_U\subseteq\Theta$, then for any nonempty $S$,
	\[
	\left| \bm{\mathcal{V}}^m(S) - \left( \frac{k}{2} \log \left(\frac{m}{2\pi e}\right) + \log\lambda(\Theta_U) + \frac{1}{2}\log|\mathcal{I}_S| \right) \right|
	\]
	converges in probability to $0$ w.r.t.~Lebesgue measure $\lambda$.
\end{corollary}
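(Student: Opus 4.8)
The plan is to pair Lemma~\ref{lem:value_convergence} with a direct evaluation of the extended KL divergence of the Gaussian approximation from the uniform prior. Since that lemma already gives $\bigl|\,\V^m(S) - \KLE{\bm{\hat{P}}^m_S}{U}\,\bigr| \goinp 0$, the triangle inequality reduces the corollary to showing that $\KLE{\bm{\hat{P}}^m_S}{U}$ converges in probability to the bracketed target. I would therefore work only with the normal $\bm{\hat{P}}^m_S = \mathcal{N}(\bm{\hat\theta}^m_S, m^{-1}\mathcal{I}_S^{-1})$ from~\eqref{crappybird}, whose density I call $p$.

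First I would evaluate the extended divergence by hand. The uniform prior has density $1/\lambda(\Theta_U)$ on its support $\Theta_U$, so the absolutely continuous component of $\bm{\hat{P}}^m_S$ w.r.t.\ $U$ is just the restriction of $\bm{\hat{P}}^m_S$ to $\Theta_U$, with Radon--Nikodym derivative $\lambda(\Theta_U)\,p(\theta)$ there. Splitting the logarithm gives
\[
\KLE{\bm{\hat{P}}^m_S}{U} = \log\lambda(\Theta_U)\,\bm{\hat{P}}^m_S(\Theta_U) + \int_{\Theta_U} p(\theta)\log p(\theta)\,\mathrm{d}\theta .
\]
The guiding idea is that, because the MLE is consistent ($\bm{\hat\theta}^m_S \goinp \theta^*$, as noted after Theorem~\ref{thm:BvM}) while A4 places $\theta^*$ in the interior of $\Theta_U$, the Gaussian concentrates well inside $\Theta_U$, so both quantities may be replaced by their counterparts integrated over all of $\mathbb{R}^k$.

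Extending the integral to $\mathbb{R}^k$ yields precisely the negative differential entropy of a $k$-variate normal with covariance $m^{-1}\mathcal{I}_S^{-1}$, namely $-\tfrac12\log\bigl((2\pi e)^k m^{-k}|\mathcal{I}_S|^{-1}\bigr) = \tfrac{k}{2}\log\!\bigl(\tfrac{m}{2\pi e}\bigr) + \tfrac12\log|\mathcal{I}_S|$, while $\bm{\hat{P}}^m_S(\Theta_U)$ is replaced by $1$; together with the $\log\lambda(\Theta_U)$ factor these reconstruct exactly the claimed expression. It then remains to control two residuals, $\log\lambda(\Theta_U)\bigl(\bm{\hat{P}}^m_S(\Theta_U) - 1\bigr)$ and the tail integral $\int_{\Theta_U^{\mathrm c}} p(\theta)\log p(\theta)\,\mathrm{d}\theta$. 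For both I would substitute $z = \sqrt{m}\,\mathcal{I}_S^{1/2}(\theta - \bm{\hat\theta}^m_S)$, under which $\bm{\hat{P}}^m_S$ becomes the standard normal with density $\phi$ and $\log p(\theta)$ becomes $C_m - \tfrac12\|z\|^2$, where $C_m = \tfrac{k}{2}\log m + \tfrac12\log|\mathcal{I}_S| - \tfrac{k}{2}\log(2\pi)$ grows like $\tfrac{k}{2}\log m$.

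The delicate point, which I expect to be the main obstacle, is the tail integral: here the log-normalisation $C_m$ grows without bound while the Gaussian mass it multiplies shrinks, so one must verify the shrinkage wins. On the high-probability event $\{\|\bm{\hat\theta}^m_S - \theta^*\| \le \delta/2\}$, where $B(\theta^*,\delta)\subseteq\Theta_U$ by A4, every $\theta\notin\Theta_U$ obeys $\|\theta - \bm{\hat\theta}^m_S\|\ge\delta/2$, hence maps into $\{\|z\|\ge c\sqrt{m}\}$ with $c>0$ set by the least eigenvalue of $\mathcal{I}_S$. The two offending quantities are then bounded by $|C_m|\,\Pr[\|Z\|\ge c\sqrt m]$ and $\int_{\|z\|\ge c\sqrt m}\|z\|^2\phi(z)\,\mathrm{d}z$, both of which decay exponentially in $m$ and so overwhelm the logarithmic factor $C_m$; the same estimate gives $\bm{\hat{P}}^m_S(\Theta_U)\goinp 1$, killing the first residual. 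Since the conditioning event has probability tending to one, all residuals vanish in probability, and a final appeal to Lemma~\ref{lem:value_convergence} delivers the corollary.
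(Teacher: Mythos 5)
Your proposal is correct, and its skeleton coincides with the paper's: both reduce the corollary to Lemma~\ref{lem:value_convergence} plus a direct evaluation of $\KLE{\bm{\hat{P}}^m_S}{U}$, and both split that divergence into the same three pieces, namely $\log\lambda(\Theta_U)\cdot\bm{\hat{P}}^m_S(\Theta_U)$, the negative Gaussian entropy $\frac{k}{2}\log\bigl(\frac{m}{2\pi e}\bigr)+\frac{1}{2}\log|\mathcal{I}_S|$, and a tail term $\int_{\Theta_U^c}\log(\bm{\hat{p}}^m_S)\,\text{d}\bm{\hat{P}}^m_S$. Where you genuinely depart from the paper is in disposing of the residuals. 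The paper delegates this to Claim~\ref{claim:residuals}, which argues qualitatively: since $\bm{\hat{\theta}}^m_S\goinp\theta^*$ and $\theta^*\notin\Theta_\Pi^c$, the density $\bm{\hat{p}}^m_S$ converges pointwise to zero on $\Theta_\Pi^c$, so both $\bm{\hat{P}}^m_S(\Theta_\Pi^c)$ and $\int_{\Theta_\Pi^c}\bm{\hat{p}}^m_S\log\bm{\hat{p}}^m_S\,\text{d}\lambda$ vanish by dominated convergence together with $\lim_{x\to0^+}x\log x=0$. You instead condition on the event $\{\|\bm{\hat{\theta}}^m_S-\theta^*\|\le\delta/2\}$ (whose probability tends to one), push $\Theta_U^c$ into the Gaussian tail $\{\|z\|\ge c\sqrt{m}\}$ via the whitening substitution, and bound the residuals by $|C_m|\Pr[\|Z\|\ge c\sqrt{m}]$ and a truncated second-moment integral, both exponentially small and hence dominating the $\frac{k}{2}\log m$ growth of the log-normalization. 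Your route is more quantitative (it yields exponential rates) and, arguably, cleaner on a technical point: the paper's Claim~\ref{claim:residuals} applies dominated convergence to a \emph{random} density whose pointwise limits hold only in probability, a step it leaves informal, whereas your event-conditioning argument sidesteps this entirely. The modest extra inputs you need are both legitimate: positive definiteness of $\mathcal{I}_S$ (already implicit, since $\mathcal{I}_S^{-1}$ appears in Theorem~\ref{thm:BvM}) to get the least-eigenvalue constant $c$, and the containment $B(\theta^*,\delta)\subseteq\Theta_U$, which follows from \textbf{A4}'s positive continuous density at $\theta^*$ for a uniform prior.
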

In condition \textbf{A4}, we assume that the support of the prior is compact, but this is not the case for a normal distribution. However, in practice, the difference between a truncated normal with a large support (centered at the mean) and the standard version is negligible. For clarity of presentation, we state the result informally using the standard normal instead of formally with a truncated one: Let \textnormal{$\bm{\mathcal{V}}^m$} be a parametric Bayesian learning game s.t.~the regularity conditions \textnormal{\textbf{A1}} to \textnormal{\textbf{A4}} hold. If the commonly agreed prior is a normal distribution $\mathcal{N}(\theta_0, \Sigma_0)$, then for any nonempty $S$,
\[
\bm{\mathcal{V}}^m(S) \approx 0.5\ k\log m + \xi + 0.5\log|\mathcal{I}_S|
\]
for a large $m$ where $\xi\hspace{-0.5mm} :=\hspace{-0.5mm} 0.5\hspace{-0.7mm}\left(\hspace{-0.1mm} \left\|\theta_0 - \theta^* \right\|^2_{\Sigma_0^{-1}}\!\! - k + \log|\Sigma_0|\right)$ is a constant that depends on the choice of the prior's parameters, the true parameter, and its dimension $k$.

In both cases above, the characteristic function $\bm{\mathcal{V}}^m$ can be approximately decomposed into a sum of three terms: The first term grows logarithmically in $m$ while the rest are fixed. This tells us that while the characteristic function diverges, it does so very slowly. The second term is a constant that depends mainly on the parameters of the prior. The third term, which is proportional to the log-determinant of the joint Fisher information, is the only one that depends on $S$. If we apply $\phi$ which is additive, then roughly,
\[
\phi(i,\V^m) \approx \phi(i, V_1) + \phi(i, V_2) + \phi(i, S\mapsto 0.5\log|\mathcal{I}_S|)
\]
for a large $m$ and some constant (w.r.t.~$S$) functions $V_1$ and $V_2$. So, if we take the difference between two players, then the first two terms cancel out and we are left with the third. The preceding heuristic argument is made precise by the following result and its proof:
\begin{theorem}\label{thm:shapley_convergence}
	Let $\bm{\mathcal{V}}^m$ be a parametric Bayesian learning game and $V$ be a cooperative game s.t.~$V(S):=0.5\log|\mathcal{I}_S|$ if $S$ is nonempty, and $V(S):=0$ otherwise. If $\phi$ is an additive solution concept, then under regularity conditions \textnormal{\textbf{A1}} to \textnormal{\textbf{A4}} and for a uniformly distributed prior and any players $i,j\in N$,
	\[
	\phi(i;\bm{\mathcal{V}}^m) - \phi(j;\bm{\mathcal{V}}^m) \goinp \phi(i;V) - \phi(j;V)
	\]
	as $m\rightarrow\infty$. We call $V$ a limiting game of $\bm{\mathcal{V}}^m$ w.r.t.~$\phi$.
\end{theorem}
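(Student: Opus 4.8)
The plan is to exploit the linearity of $\phi$ to transfer the per-coalition convergence of Corollary~\ref{cor:uniform_prior} into convergence of the Shapley/Banzhaf value, while isolating the (diverging) $m$-dependent part of the approximate characteristic function into a symmetric auxiliary game that cancels exactly when one subtracts the values of two players.

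First, I would introduce the deterministic, $m$-dependent game
$$ G^m(S) := \frac{k}{2}\log\!\left(\frac{m}{2\pi e}\right) + \log\lambda(\Theta_U) + \frac{1}{2}\log|\mathcal{I}_S| \quad\text{for } S\neq\emptyset, \qquad G^m(\emptyset):=0. $$
By Corollary~\ref{cor:uniform_prior}, $\bm{\mathcal{V}}^m(S) - G^m(S) \goinp 0$ for every nonempty $S$, and the difference is identically $0$ at $S=\emptyset$. Since $\phi$ is additive, $\phi(i;\bm{\mathcal{V}}^m) - \phi(i;G^m) = \phi(i;\bm{\mathcal{V}}^m - G^m)$, which by the defining formula of $\phi$ is a \emph{fixed} (independent of $m$) finite linear combination, with weights $w_S$, of the quantities $(\bm{\mathcal{V}}^m - G^m)(T)$ over coalitions $T$. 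Because a finite linear combination of terms each converging in probability to $0$ also converges in probability to $0$, I obtain $\phi(i;\bm{\mathcal{V}}^m) - \phi(i;G^m) \goinp 0$ for every player $i$.

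Next, I would decompose $G^m$ additively as $G^m = W^m + V$, where
$$ W^m(S) := \frac{k}{2}\log\!\left(\frac{m}{2\pi e}\right) + \log\lambda(\Theta_U) \quad\text{for } S\neq\emptyset, \qquad W^m(\emptyset):=0, $$
and $V$ is the limiting game of the statement. The crucial observation is that $W^m$ is constant across all nonempty coalitions, so in the weighted sum defining $\phi(i;W^m)$ every marginal contribution $W^m(S\cup\{i\}) - W^m(S)$ vanishes except the one at $S=\emptyset$; the surviving term carries a weight $w_\emptyset$ that is the same for every player (namely $1/|N|$ for the Shapley value and $1/2^{|N|-1}$ for the Banzhaf value). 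Hence $\phi(i;W^m)$ does not depend on $i$, i.e.~$W^m$ is a symmetric game. Additivity of $\phi$ then gives, deterministically for each $m$,
$$ \phi(i;G^m) - \phi(j;G^m) = \bigl(\phi(i;W^m) - \phi(j;W^m)\bigr) + \bigl(\phi(i;V) - \phi(j;V)\bigr) = \phi(i;V) - \phi(j;V). $$

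Finally, I would combine the two steps through the decomposition
$$ \phi(i;\bm{\mathcal{V}}^m) - \phi(j;\bm{\mathcal{V}}^m) = \bigl(\phi(i;\bm{\mathcal{V}}^m) - \phi(i;G^m)\bigr) - \bigl(\phi(j;\bm{\mathcal{V}}^m) - \phi(j;G^m)\bigr) + \bigl(\phi(i;G^m) - \phi(j;G^m)\bigr). $$
The first two parenthesized terms tend to $0$ in probability by the first step, while the last equals the deterministic constant $\phi(i;V)-\phi(j;V)$ by the second step; invoking once more that convergence in probability is preserved under finite sums yields the claim. The genuine analytic content has already been discharged in Corollary~\ref{cor:uniform_prior}, so the main point to get right here is the bookkeeping: one must verify that $\phi(i;\cdot)$ is the \emph{same} linear functional of the coalition values for all $m$ (so the weights $w_S$ carry the convergence through unchanged), and that the finitely many per-coalition convergences may be used simultaneously --- which is legitimate precisely because $N$ is finite. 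The reason a single Shapley value cannot converge, and why one must take a difference of two players, is that the diverging $\tfrac{k}{2}\log m$ contribution lives entirely inside the symmetric game $W^m$ and therefore cancels only when two players' values are subtracted.
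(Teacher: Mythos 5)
Your proposal is correct and follows essentially the same route as the paper's proof: both rest on Corollary~\ref{cor:uniform_prior}, the linearity of $\phi$, and the observation that games constant across nonempty coalitions contribute equally to every player and hence cancel in the difference $\phi(i;\cdot)-\phi(j;\cdot)$. The only (cosmetic) difference is that the paper packages the final step as the continuous mapping theorem applied to the linear functional $v\mapsto\phi(i;v)-\phi(j;v)$ on $\mathbb{R}^{2^{|N|}-1}$, whereas you argue the finite-linear-combination step directly --- and your explicitly $m$-indexed games $G^m$ and $W^m$ in fact tidy up the paper's slightly loose phrasing that $\bm{\mathcal{V}}^m$ ``converges to'' a decomposable game whose first component diverges with $m$.
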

We will now mention several remarks regarding the above result: (a) It does not say that the game $\bm{\mathcal{V}}^m$ converges to $V$. In fact, $\bm{\mathcal{V}}^m$ is divergent, as described previously. Rather, it says that as far as the differences in Shapley value are concerned, the two games are equivalent in the limit. (b) Though a uniform prior is assumed, the result also holds for the normal prior, as argued previously and  shown in our experimental results. (c) The result applies not only to the Shapley/Banzhaf value, but also to any additive solution concept or even, more generally, any linear transformation of the characteristic function. (d) Notice that the function $V$ does not depend on the choice of the prior's parameters. In other words, in the presence of a large quantity of data, the choice of the prior is inconsequential; the relative contributions of the players depend only on their Fisher information. Finally, (e) the result introduces the notion of limiting games which is useful in describing asymptotically equivalent sequences of games. Two games are equivalent (w.r.t.~a solution concept) when there exists a fixed game that is limiting for both.

Coming back to the example in Fig~\ref{fig:example}, in Fig.~\ref{fig:example:shapley_value}, we show the differences in Shapley value as the number $m$ of data points shared increases. For each $m$, we generate $10$ instances of the game by sampling $m$ data points from the observable variables. Bayesian inferences are then performed using a normal prior and the Shapley values are computed using posterior-prior KL divergences. The differences are then plotted and represented by the circles in Fig.~\ref{fig:example:shapley_value}. The dashed lines represent the differences computed based on the game $S\mapsto 0.5\log|\mathcal{I}_S|$. Fig.~\ref{fig:example:shapley_value} shows that as $m$ increases, the sampled differences get increasingly concentrated around the dashed lines, hence aligning with Theorem~\ref{thm:shapley_convergence}.

\section{Shapley-Fair Online Collaborative Learning Framework} \label{sec:shapley-fair-algorithm}

In the previous section, we have shown that under some regularity conditions, the difference in Shapley values converges to a value that is determined solely by the Fisher information. In this section, we exploit this fact to present a $2$-player iterative learning framework that is asymptotically Shapley-fair. In this framework, players do not provide their data all at once but rather at a rate that is determined by the framework. The main idea is to vary the rates s.t.~in the limit, the determinants of both players' Fisher information are the same. This is achieved by bundling several data points together and treating them as a single unit. Consider a $2$-player game with players $1$ and $2$. If we bundle $r$ of player $1$'s data points, then given the i.i.d.~condition of the data points, its new Fisher information is given by
\[
\begin{array}{l}
\displaystyle \text{Var}\left[ \frac{\partial}{\partial\theta} \log\prod_{j=1}^{r} \mathcal{L}_1(\theta^*; \textbf{X}_{1j})\right] \vspace{0.5mm}\\ 
\displaystyle = \sum_{j=1}^{r} \text{Var} \left[ \frac{\partial}{\partial\theta} \log\mathcal{L}_1(\theta^*;\textbf{X}_{1j}) \right] = r\mathcal{I}_1\ .
\end{array}
\]
So, setting $r = (|\mathcal{I}_2\mathcal{I}_1^{-1}|)^{1/k}$ yields $|r_1\mathcal{I}_1| = |\mathcal{I}_2|$. In other words, if player $1$ provides data points at the rate of $r$ w.r.t.~player $2$, then  the difference in Shapley value converges in probability to zero, by Theorem~\ref{thm:shapley_convergence}.

The two main steps in the framework are (a) the estimation of the true parameter which is the main objective of the collaboration, and (b) the estimation of the Fisher information which is used to determine the rates at which data are received. This is done using sample approximation. The framework starts with some data points from both players and repeats the following steps: 
\begin{enumerate}
	\item Use current available data points to compute an estimate $\bar{\theta}$ of the true parameter.
	\item Let $m_i$ denote the number of player $i$'s data points received so far. For $ i=1,2$, use sample approximation to estimate the Fisher information at $\bar{\theta}$:
	\[
	\hat{\mathcal{I}}_i \!=\! \frac{1}{m_i} \! \sum_{j=1}^{m_i} \! \left( \!\!\frac{\partial}{\partial\theta} \log\mathcal{L}_i(\bar{\theta};x_{ij}) \!\!\right) \!\! \left( \!\! \frac{\partial}{\partial\theta} \log\mathcal{L}_i(\bar{\theta};x_{ij}) \!\! \right)^\top .
	\]
	\item Collect $r_1$ and $r_2$ data points from the respective players $1$ and $2$ s.t.~the proportion $m_1+r_1 : m_2+r_2$ of their cumulative data points  is equal to $|\hat{\mathcal{I}}_2|^{1/k} : |\hat{\mathcal{I}}_1|^{1/k}$.
\end{enumerate}
In practice, minimum and maximum constraints can be imposed on $r_1$ and $r_2$ to limit the number of data points collected in each iteration. If the estimator in step $1$ is consistent, then by the law of large numbers and continuous mapping theorem, the sampled Fisher information converges in probability to the true Fisher information. This, together with Theorem~\ref{thm:shapley_convergence}, gives us the following result:
\begin{corollary}\label{thm:algorithm_convergence}
	Suppose that $\bar{\theta}$ in step $1$ is a consistent estimator of the true parameter. Then, as the number of iterations goes to infinity, the difference in Shapley value between the two players converges in probability to zero.
\end{corollary}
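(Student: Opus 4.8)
The plan is to reduce the claim to two facts: that the framework's sampled Fisher information is consistent, so Step~3 drives the realised ratio $m_1/m_2$ to its ideal value, and that the two-player difference in Shapley value is governed by this ratio together with the true Fisher information. A useful simplification is that for two players the grand-coalition term cancels, so if $\bm{\mathcal{V}}$ denotes the game held after a given iteration (with players $1,2$ holding $m_1,m_2$ data points), then
\[
\phi(1;\bm{\mathcal{V}}) - \phi(2;\bm{\mathcal{V}}) = \bm{\mathcal{V}}(\{1\}) - \bm{\mathcal{V}}(\{2\}),
\]
and each singleton value is itself a single-player Bayesian learning game, to which Corollary~\ref{cor:uniform_prior} applies with $m=m_i$.

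First I would note that the minimum constraint on $r_1,r_2$ forces a deterministic lower bound $m_i \ge c\,t$, so both counts diverge and the consistency hypothesis gives $\bar\theta \goinp \theta^*$. Next I would establish $\hat{\mathcal{I}}_i \goinp \mathcal{I}_i$: writing $\hat{\mathcal{I}}_i$ as the empirical average of score outer-products evaluated at $\bar\theta$, I split it into the same average evaluated at $\theta^*$, which tends to $\mathcal{I}_i$ by the law of large numbers, plus a remainder that vanishes in probability by an equicontinuity argument over a neighbourhood of $\theta^*$ (using \textbf{A1} and an integrable envelope for the scores) combined with $\bar\theta \goinp \theta^*$. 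The continuous mapping theorem then yields $|\hat{\mathcal{I}}_i|^{1/k} \goinp |\mathcal{I}_i|^{1/k}$, with $|\mathcal{I}_i|>0$ since \textbf{A3} makes $\mathcal{I}_i$ positive definite. Because Step~3 targets the cumulative proportion $|\hat{\mathcal{I}}_2|^{1/k}:|\hat{\mathcal{I}}_1|^{1/k}$, the realised ratio satisfies $m_1/m_2 \goinp r^* := (|\mathcal{I}_2|/|\mathcal{I}_1|)^{1/k}$.

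I would then assemble the estimate. By Corollary~\ref{cor:uniform_prior} applied to each singleton, $\bm{\mathcal{V}}(\{i\}) = \tfrac{k}{2}\log m_i + \tfrac{1}{2}\log|\mathcal{I}_i| + c + o_{\mathrm P}(1)$, where the constant $c = -\tfrac{k}{2}\log(2\pi e)+\log\lambda(\Theta_U)$ is common and cancels in the difference. Hence
\[
\phi(1;\bm{\mathcal{V}}) - \phi(2;\bm{\mathcal{V}}) = \tfrac{k}{2}\log\frac{m_1}{m_2} + \tfrac{1}{2}\log\frac{|\mathcal{I}_1|}{|\mathcal{I}_2|} + o_{\mathrm P}(1).
\]
Substituting $m_1/m_2 \goinp r^*$ sends the first term to $\tfrac{1}{2}\log(|\mathcal{I}_2|/|\mathcal{I}_1|)$, which cancels the second term exactly, so the whole expression converges in probability to $0$.

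The main obstacle I foresee is the coupling between the adaptively chosen counts and the value-convergence guarantees, which are stated for a fixed deterministic $m$: here $m_1,m_2$ are random and are themselves functions of the data through $\hat{\mathcal{I}}_i$, so the residual $o_{\mathrm P}(1)$ must be controlled along a random index sequence rather than a fixed one. I expect to resolve this by upgrading the singleton value-convergence to hold uniformly over $m$ in a window $[c\,t, C\,t]$ (legitimate because each $m_i$ is deterministically sandwiched between two sequences tending to infinity), or equivalently by strengthening it to almost-sure convergence, after which evaluation at the random index is immediate. The equicontinuity step for $\hat{\mathcal{I}}_i$ is the only other point needing care, and it is standard.
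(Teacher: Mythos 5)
Your proposal is correct, but it reaches the conclusion by a genuinely different route than the paper. The paper never expands the singleton values at heterogeneous sample sizes; instead it uses a bundling device: treating $r_1$ (resp.\ $r_2$) raw points as one unit makes the per-unit Fisher informations $r_1\mathcal{I}_1$ and $r_2\mathcal{I}_2$, and the rate choice forces $|r_1\mathcal{I}_1| = |r_2\mathcal{I}_2|$; it then invokes Theorem~\ref{thm:shapley_convergence} and observes that the limiting game $V$ is symmetric in the two players (equal marginal contributions to the only other coalition, the empty one), so $\phi(1;V)-\phi(2;V)=0$. You bypass Theorem~\ref{thm:shapley_convergence} entirely: the two-player identity $\phi(1;\V)-\phi(2;\V)=\V(\{1\})-\V(\{2\})$ eliminates the grand-coalition term, and Corollary~\ref{cor:uniform_prior} applied to each singleton at its own count $m_i$ gives the explicit cancellation $\tfrac{k}{2}\log(m_1/m_2)+\tfrac{1}{2}\log\bigl(|\mathcal{I}_1|/|\mathcal{I}_2|\bigr)\goinp 0$; the algebra checks out. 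What each approach buys: yours avoids the bundling trick (and its tacit integer-bundle assumption), makes the cancellation mechanism fully explicit, and correctly flags the one real technical subtlety --- Corollary~\ref{cor:uniform_prior}'s $o_{\mathrm P}(1)$ is stated along a deterministic index while $m_1,m_2$ here are random and data-dependent --- which the paper silently ignores, since its own application of Theorem~\ref{thm:shapley_convergence} at adaptively chosen rates has exactly the same issue; your proposed fix via uniformity over a deterministic window is reasonable though still a sketch. The paper's route, in exchange, goes through the general machinery and is the form of argument one would hope to extend beyond two players, whereas your singleton-cancellation identity is special to $|N|=2$. Your treatment of $\hat{\mathcal{I}}_i\goinp\mathcal{I}_i$ (LLN at $\theta^*$ plus an equicontinuity remainder controlled by an integrable score envelope) is more careful than the paper's one-line appeal to the weak law of large numbers at the random point $\bar{\theta}$ followed by the continuous mapping theorem. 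One small caveat: your claim that \textbf{A3} makes $\mathcal{I}_i$ positive definite does not literally follow from distinguishability; invertibility of $\mathcal{I}_S$ is an implicit standing assumption of the paper (it appears as $\mathcal{I}_S^{-1}$ in Theorem~\ref{thm:BvM} and as $\log|\mathcal{I}_S|$ throughout), so you are consistent with the paper's setting but should attribute the positivity to that assumption rather than to \textbf{A3}.
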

Note that the framework does not require any Bayesian inference. The corollary states that \textit{if} in each iteration, Bayesian inferences were performed and the KL divergences were computed, then the difference in Shapley value (based on those divergences) would converge in probability to zero.

\section{Experimental Results}

In this section, our framework is empirically verified by showing that the Shapley values converge over time in several $2$-player scenarios, followed by multi-player scenarios. 
We will investigate parameter estimation, as in Fig.~\ref{fig:example}: parameter estimation for \textit{Bayesian linear regression}~(BLR) on real-world datasets and mean estimation in a learned latent space of a \textit{variational auto-encoder}~(VAE)~\cite{KingmaW13-vae}.

\paragraph{Data, experiments, and implementation details.}
We will investigate parameter estimation for BLR on three real-world datasets: \textit{California housing}~(CaliH) data~\cite{KELLEYPACE1997291-CaliH}, King County house sales prediction~\cite{kingH-dataset}, and age estimation from facial images~\cite{zhifei2017cvpr-FaceA}; and mean estimation in a learned $2$-dimensional latent space of a VAE on the digits $0$ and $1$ of MNIST. Our code is publicly available at \url{https://github.com/XinyiYS/Parametric-Bayesian-Learning-Games}.

\textit{Experiment~1.}
For the CaliH data consisting of $20640$ data points of $8$-dimensional real-valued features, BLR is directly performed on the $6$ real-valued features.\footnote{We drop the latitude and longitude features for CaliH data.} So, the parameter to be estimated is a $6$-dimensional real-valued vector with a standard normal prior. For KingH~(FaceA) consisting of data of higher dimensions, we train a deep~(i.e., \textit{convolutional}) neural network as a feature extractor to extract $10$~($9$)-dimensional real-valued features to perform BLR. So, the parameter to be estimated is a $10$~($9$)-dimensional real-valued vector. Additional details for KingH and FaceA are in  Appendix~\ref{appendix:experiments}.

\textit{Experiment~2.}
This experiment is motivated from density estimation for high-dimensional data being computationally expensive. So, it is often more practical to estimate their density in the latent space and mean estimation is a step in that direction.
Specifically, supposing the data distribution contains several modes/classes, we want to estimate the mean for each mode in the latent space. In this experiment, we use digits $0$ and $1$ from the MNIST dataset for a $2$-mode mean estimation and the mapping to a $2$-dimensional latent space is given by a pre-trained VAE. So, the parameter to be estimated is $4$-dimensional with a standard normal prior. 

\textit{Implementation of framework in Section~\ref{sec:shapley-fair-algorithm}.}
Starting with $m_i$ initial data points for P$i$~(i.e., player $i$), we perform Bayesian inference to obtain the (joint) posterior mean $\bar\theta$~(as an estimate of the true parameter) to be used for approximating the Fisher information $\hat{\mathcal{I}}_i$ which is in turn used to determine the number $r_i$ of data points to collect for the next iteration as $r_i = r_{i^*} |\mathcal{I}_{i^*}\mathcal{I}_{i}^{-1} |^{1/k}$ where $i^* \coloneqq \arg\!\max_{i\in N} |\mathcal{I}_{i}|$ and the number $r_{i^*}$ of data points to collect for the player with the  higher/highest Fisher information follows a preset constant, and
update $m_i \gets m_i + r_i$.
As the Fisher information of player $i$'s data points is less than that of player $i^*$, player $i$ needs to collect more data points than player $i^*$ in the next iteration.
For $2$ players, this specification of $r_i$ ensures that the difference in their Shapley values converges in probability to zero, by Theorem~\ref{thm:shapley_convergence}. 
To calculate the Shapley values $\phi(i;\bm{\mathcal{V}}^{m_i})$ described in Section~\ref{sec:bayesian-learning-game}, we perform additional inferences on the players' \textit{individual} data~(up to the current iteration) to compute the posterior-prior KL divergences $\bm{\mathcal{V}}^{m_i}(S) = \KL{\bm{P}_S^{m_i}}{\Pi}$~\eqref{gatcha}.

\textbf{2-player specification and results.}
In experiment~$1$, P$1$ samples P$1$\_sample\_size data points from a restricted subset~(of size P$1$\_data) of the original data. This simulates that P$1$ can only observe data~(e.g., housing prices) from a certain area or district for constructing its parameter estimate. Two sampling methods are considered: standard uniform sampling~(denoted as iid) and a method based on the statistical leverage scores~\cite{Drineas2012-leverageiid}~(denoted as lvg\_iid) due to its suitability in linear regression. Intuitively, higher P$1$\_data, P$1$\_sample\_size, and lvg\_iid~(vs.~iid) should result in more accurate parameter estimates~(i.e., higher Fisher information) from P$1$.
P$2$ samples noisy observations from a distribution whose mean is estimated based on a proportion~(P$2$\_ratio) of the original data with a fraction~(P$2$\_nan) of missing values. In other words, P$2$'s observation is more direct but defective due to noise and missing values. 
Intuitively, a higher P$2$\_ratio and a lower P$2$\_nan should result in more accurate parameter estimates~(i.e., higher Fisher information) from P$2$.
Specific details are described 
in Appendix~\ref{appendix:experiments}.

In experiment~$2$, P$1$ and P$2$ have a subset of the original data to sample from and their sizes are, respectively, denoted by P$1$\_data and P$2$\_data. But, their data (respectively, denoted by P$1$\_ratio and P$2$\_ratio) are biased towards one digit or the other. For example, P$1$\_ratio\ $=0.1$ means that P$1$ has $0.1$~($0.9$) probability of sampling the digit $0$~($1$).

\begin{figure}
    \begin{minipage}{0.5\textwidth}
    \hspace{-3mm}
        \centering
        \includegraphics[width=0.45\textwidth]{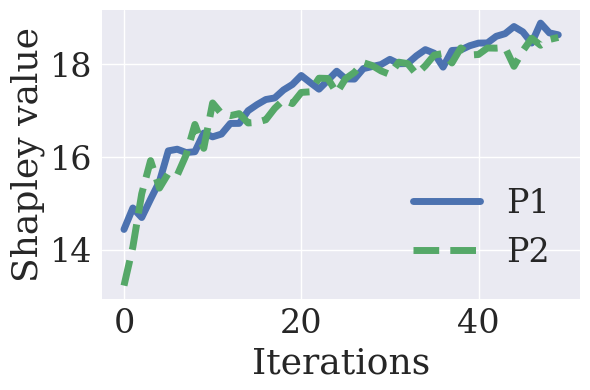}
        \hspace{0.01\textwidth}
        \includegraphics[width=0.45\textwidth]{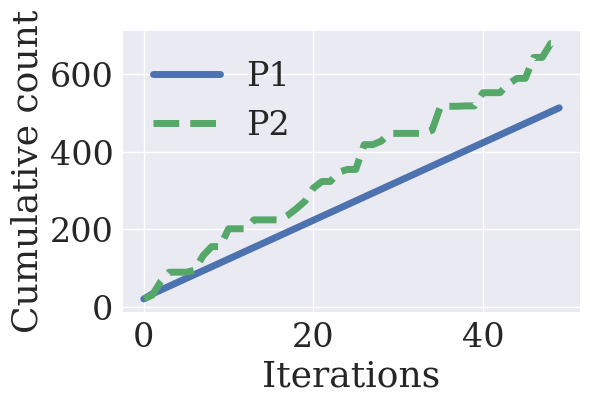}
    \end{minipage}
    \begin{minipage}{0.5\textwidth}
    \hspace{-3mm}
        \centering
        \includegraphics[width=0.45\textwidth]{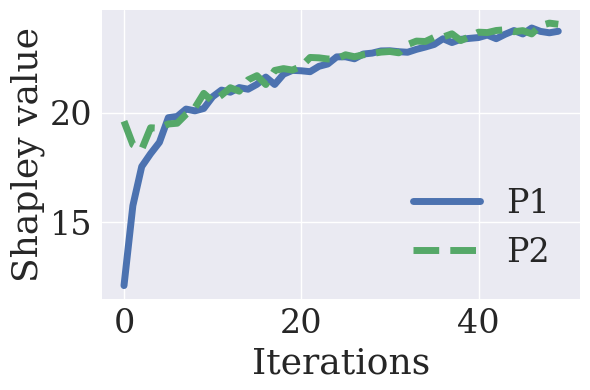}
        \hspace{0.01\textwidth}
        \includegraphics[width=0.45\textwidth]{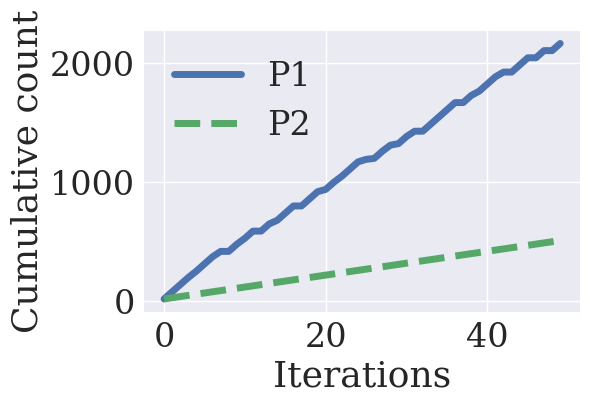}
    \end{minipage}
    \caption{SV and cumulative count vs.~iterations on \textbf{CaliH}~(top) and \textbf{FaceA}~(bottom) for $2$-player scenarios. For CaliH, P$1$\_data=$5000$, P$1$\_sample\_size\ $=500$, P$1$'s sampling is lvg\_iid, P$2$\_ratio\ $=0.01$, and P$2$\_nan\ $=0.4$. For FaceA, P$1$\_data\ $=1000$, P$1$\_sample\_size\ $=100$, P$1$'s sampling is lvg\_iid, P$2$\_ratio\ $=0.1$, and P$2$\_nan\ $=0.1$.}
    \label{fig:CaliH-FaceA}
\end{figure}

The Shapley values and the cumulative counts of shared data points over the iterations are shown in Fig.~\ref{fig:CaliH-FaceA}. To additionally verify convergence, we will analyze the following quantities: (a) statistics of the relative difference $\delta \coloneqq |(\phi_1 - \phi_2) / (\phi_1 + \phi_2)|$, including the lowest, average, and standard deviation, and (b) the number of iterations (denoted by Iter) required for $\delta$ to be smaller than a threshold of $0.1$ for $5$ consecutive iterations. These are computed after a burn-in period of $5$ iterations, as shown in Table~\ref{table:CaliH} for CaliH. Results for other datasets are in Appendix~\ref{appendix:experiments}.

Fig.~\ref{fig:CaliH-FaceA}~(left column) shows that the Shapley values of P$1$ and P$2$ converge over time and verifies our claim. More importantly, our framework adjusts the cumulative counts based on the Fisher information of the players' data points.
The plots on cumulative count in Fig.~\ref{fig:CaliH-FaceA}~(right column) shows that P$2$~(P$1$) needs to consistently collect more data points for CaliH~(FaceA). This is due to the difference in the settings for P$1$ and P$2$'s data: For CaliH, P$1$\_data and  P$1$\_sample\_size are higher and can hence lead to a higher accuracy in P$1$'s parameter estimates. Also, P$2$\_ratio is lower and P$2$\_nan is higher, which can lead to a lower accuracy in P$2$'s parameter estimates. Consequently, for CaliH, P$2$ needs to collect more data points. This shows that our framework identifies the correct relationship between the accuracy of the players' parameter estimates and correspondingly sets the cumulative counts in order to ensure that their Shapley values converge.

Table~\ref{table:CaliH} illustrates this effect in greater detail on CaliH via the statistics on $\delta$ by varying P$1$'s settings. It shows $3$ cases of the difference not diminishing~(i.e., $3$ rows with *), which can be attributed to P$1$'s sample size~(of $10$) being too restrictive to construct accurate parameter estimates to match P$2$'s. This suggests that if the data qualities of P$1$ and P$2$ differ significantly, it may be more difficult~(e.g., more iterations) for their Shapley values to converge. Additional results for other datasets are in  Appendix~\ref{appendix:experiments}.

\begin{table}
\setlength{\tabcolsep}{4pt}
\caption{Statistics on $\delta$ for CaliH with varying P$1$'s contributions.
Iter denotes the smallest number of iterations for $\delta$ to be smaller than $0.1$ for $5$ consecutive iterations and * denotes cases where it is not satisfied. Also, P$2$\_nan\ $=$\ P$2$\_ratio\ $=0.1$. 
}
\resizebox{0.48\textwidth}{!}{
\begin{tabular}{rrlrrrl}
\toprule
    P$1$\_data &  P$1$\_sample\_size & sampling &  Lowest &  Average &  StDev &   Iter \\
\midrule
    1000 &        10 &         iid &   0.239 &    0.267 &  0.021 &          50* \\
    1000 &        10 &     lvg\_iid &   0.292 &    0.318 &  0.021 &          50* \\
    1000 &       100 &         iid &   0.090 &    0.104 &  0.011 &           30 \\
    1000 &       100 &     lvg\_iid &   0.092 &    0.105 &  0.011 &           37 \\
    1000 &       500 &         iid &   0.044 &    0.053 &  0.007 &            5 \\
    1000 &       500 &     lvg\_iid &   0.050 &    0.060 &  0.008 &            5 \\
    5000 &        10 &         iid &   0.353 &    0.376 &  0.019 &          50* \\
    5000 &       500 &         iid &   0.094 &    0.105 &  0.008 &           43 \\
    5000 &       500 &     lvg\_iid &   0.072 &    0.083 &  0.008 &            7 \\
\bottomrule
\end{tabular}
}
\label{table:CaliH}
\end{table}

\paragraph{Multi-player extension and results.}
We will first examine the $3$-player scenario in Fig.~\ref{fig:example} (denoted as synthetic): P$1$~(P$3$) samples some noisy linear transformations of the true parameters~(into a $1$-dimensional variable) perturbed with known~(unknown) zero-mean normal noise with standard deviation of $1$~($1.1$), while P$2$ samples noisy observations of the true parameters where the noise is zero-mean isotropic normal with variance $2.5$.
Subsequently, we extend both experiments~$1$ and $2$. For experiment~$1$, we add two players P$3$ and P$4$ where the data sizes of P$3$ and P$1$ are equal but P$1$~(P$3$) adopts the lvg\_iid~(iid) sampling, and P$4$ is defined in the same way as P$2$~(but its actual data are different). 
For experiment~$2$, we add two players P$3$ and P$4$ who have a ratio of $0.5$~(i.e., balanced between digits $0$ and $1$) and data of the same size as P$1$'s and P$2$'s data, respectively.

The Shapley values and the cumulative counts of shared data points over the iterations for synthetic and KingH are shown in Fig.~\ref{fig:multi-synthetic-KingH}. We observe that the Shapley values of the players converge over iterations, which empirically demonstrates that our framework can be generalized to multiple players. For synthetic, we observe that P$1$ and P$3$ need to collect consistently more data points than P$2$ since P$2$ directly samples (noisy) observations of the true parameter. P$1$ and P$3$ do not differ by much as P$3$ maintains and updates a noise estimate dynamically; so, as its noise estimate becomes more accurate, P$3$ becomes closer to P$1$. For KingH, we observe that both P$2$ and P$4$ need to collect consistently more data points than P$1$ and P$3$, possibly because the ratio of missing values is high. The cumulative counts for P$2$ and P$4$ overlap because P$2$ and P$4$ share an identical data specification; similarly, for P$1$ and P$3$, they differ only in the sampling method~(lvg\_iid vs.~iid). Additional experimental results for other datasets are in Appendix~\ref{appendix:experiments}.

\begin{figure}
    \begin{minipage}{0.5\textwidth}
    \hspace{-1mm}
        \centering
        \includegraphics[width=0.45\textwidth]{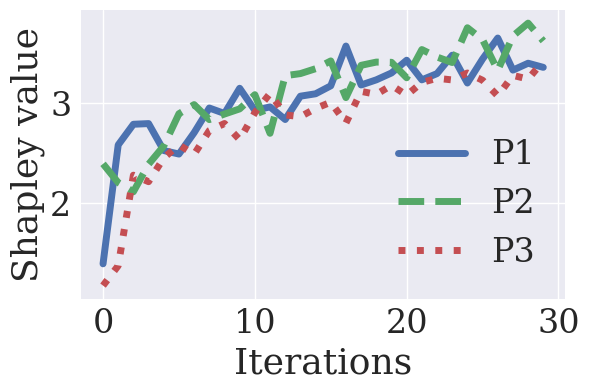}
        \hspace{0.01\textwidth}
        \includegraphics[width=0.45\textwidth]{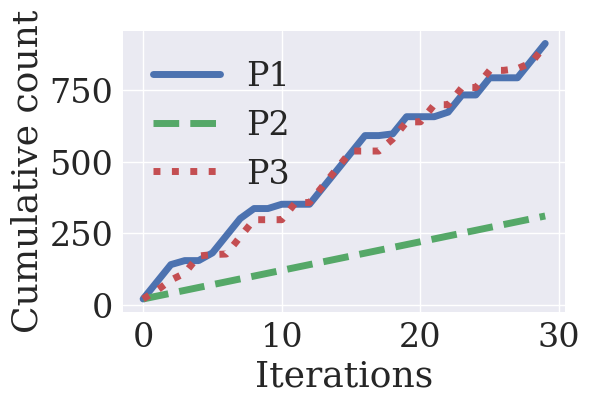}
    \end{minipage}
    \begin{minipage}{0.5\textwidth}
    \hspace{-3mm}
        \centering
        \includegraphics[width=0.45\textwidth]{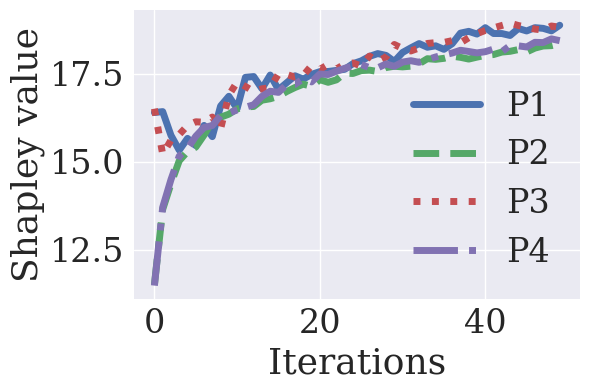}
        \hspace{0.01\textwidth}
        \includegraphics[width=0.45\textwidth]{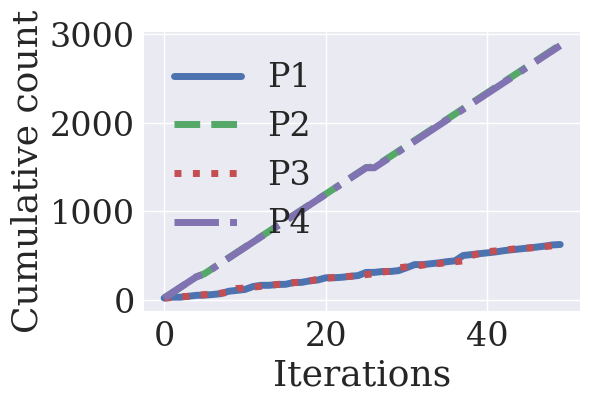}
    \end{minipage}
    \caption{SV and cumulative count vs.~iterations on \textbf{synthetic}~(top) and \textbf{KingH}~(bottom) for multi-player scenarios. 
    For KingH, P$1$/P$3$\_data\ $=1000$, P$1$/P$3$\_sampling\_size\ $=100$, P$2$/P$4$\_ratio\ $=0.1$, and P$2$/P$4$\_nan\ $=0.2$.
    }
    \label{fig:multi-synthetic-KingH}
\end{figure}

\section{Conclusion and Future Work}

In this paper, we study the asymptotic properties of linear solution concepts (in particular, the Shapley value) in parametric Bayesian learning games. We show that in the limit, under some regularity conditions, the differences in Shapley value are determined solely by the players' Fisher information. This allows us to use a consistent Fisher information estimator in an online collaborative framework to achieve asymptotically Shapley-fair data sharing rates. Though the theoretical guarantee is established only for the $2$-player setting, promising empirical results suggest a possibility for a more general result. Extending it to an arbitrary number of players will be the primary focus of our future work. 

Though the assumptions used in this work are mild, their relaxations can be considered in the future works. This includes using a general (possibly improper) prior and dropping the independence assumption between players' data. A particularly interesting direction is the relaxation of the distinguishability property. There may be cases when even with an infinite quantity of data, no single player is able to identify the true parameter. This can occur, for example, when the players are restricted to observe only certain parts of the same sample space. It is only by collaborating that the true parameter can be identified with a greater accuracy. This will provide a stronger motivation for collaborating beyond a faster convergence rate. Finally, the use of different divergences and how that affects the resulting Shapley value would also be considered.

\section*{Acknowledgements}
This research/project is supported by the National Research Foundation Singapore and DSO National Laboratories under the AI Singapore Programme (AISG Award No: AISG$2$-RP-$2020$-$018$).  
Xinyi Xu is supported by the Institute for Infocomm Research of Agency for Science, Technology and Research (A*STAR). 

\bibliography{references}
\bibliographystyle{icml2022}

\newpage
\appendix
\onecolumn

\newpage
\appendix

\section{Proofs of Theoretical Results}
\label{goodness}

Throughout, the measurable space is taken to be $\mathbb{R}^k$ equipped with the standard Borel sigma algebra. Unless otherwise stated, absolute continuity is w.r.t.~the Lebesgue measure which is denoted with $\lambda$. We will start with the following claim. The terms evaluated in this claim often appear as residuals in later parts and here, they are shown to converge in probability either to zero or one.

\begin{claim}\label{claim:residuals}
	The following are some properties of the approximate joint posterior $\bm{\hat{P}}_S^m$ in relation to the  support of the prior $\Theta_\Pi$ and its complement $\Theta_\Pi^c$:
	\begin{itemize}
		\item[i.] $\displaystyle{\int_{\Theta_\Pi^c} \text{\emph{d}}\bm{\hat{P}}_S^m \goinp 0}\quad$ and $\quad\displaystyle{\int_{\Theta_\Pi} \text{\emph{d}}\bm{\hat{P}}_S^m \goinp 1}$,
		\item[ii.] $\displaystyle{\int_{\Theta_\Pi^c}} \log \left( \frac{\text{\emph{d}}\bm{\hat{P}}_S^m}{\text{\emph{d}}\lambda} \right) \text{\emph{d}}\bm{\hat{P}}_S^m \goinp 0$.
	\end{itemize} 
\end{claim}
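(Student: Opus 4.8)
The plan is to exploit the fact that $\bm{\hat{P}}_S^m = \mathcal{N}(\bm{\hat{\theta}}_S^m, m^{-1}\mathcal{I}_S^{-1})$ concentrates sharply at $\theta^*$: its mean satisfies $\bm{\hat{\theta}}_S^m \goinp \theta^*$ and its covariance shrinks at rate $1/m$. By assumption \textbf{A4}, $\theta^*$ lies in the interior of $\Theta_\Pi$, so I would fix $\delta>0$ with $B(\theta^*,2\delta)\subseteq\Theta_\Pi$, whence $\Theta_\Pi^c\subseteq B(\theta^*,2\delta)^c$. The whole argument runs on the high-probability event $E_m := \{\|\bm{\hat{\theta}}_S^m-\theta^*\|<\delta\}$, which satisfies $\Pr[E_m]\to 1$ by consistency of the MLE under \textbf{A1}--\textbf{A3}. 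On $E_m$ every quantity of interest is bounded by a \emph{deterministic} null sequence, and combined with $\Pr[E_m^c]\to 0$ this delivers the convergence in probability. The key reduction on $E_m$ is that $\|\theta-\theta^*\|\ge 2\delta$ forces $\|\theta-\bm{\hat{\theta}}_S^m\|\ge\delta$, so the rescaled variable $W := \sqrt{m}(\theta-\bm{\hat{\theta}}_S^m)\sim\mathcal{N}(0,\mathcal{I}_S^{-1})$ ranges over $\{\|W\|\ge\sqrt{m}\,\delta\}$ when $\theta\in\Theta_\Pi^c$.

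For part (i) it suffices to treat $\int_{\Theta_\Pi^c}\mathrm{d}\bm{\hat{P}}_S^m = \bm{\hat{P}}_S^m(\Theta_\Pi^c)$, since the second limit is its complement in the total mass $1$. The reduction above gives, on $E_m$, the deterministic bound $\bm{\hat{P}}_S^m(\Theta_\Pi^c)\le \Pr[\|W\|\ge\sqrt{m}\,\delta] =: b_m$, a fixed multivariate-Gaussian tail with threshold $\sqrt{m}\,\delta\to\infty$, so $b_m\to 0$ and part (i) follows.

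For part (ii) I would write the log-density explicitly (writing $\|v\|^2_{\mathcal{I}_S}:=v^\top\mathcal{I}_S v$) as $\log(\mathrm{d}\bm{\hat{P}}_S^m/\mathrm{d}\lambda)(\theta) = C_m - \tfrac{m}{2}\|\theta-\bm{\hat{\theta}}_S^m\|^2_{\mathcal{I}_S}$, with normalizing constant $C_m = -\tfrac{k}{2}\log(2\pi)+\tfrac{k}{2}\log m + \tfrac12\log|\mathcal{I}_S|$. Integrating over $\Theta_\Pi^c$ splits the target into $C_m\,\bm{\hat{P}}_S^m(\Theta_\Pi^c)$ minus a truncated second-moment term, and after the rescaling $W=\sqrt{m}(\theta-\bm{\hat{\theta}}_S^m)$ the latter becomes $\tfrac12\,\mathbb{E}_W[\|W\|^2_{\mathcal{I}_S}\mathbf{1}\{\|W\|\ge\sqrt{m}\,\delta\}]$ on $E_m$.

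The main obstacle, and the reason the claim is not immediate, is that both terms carry growth factors that could \emph{a priori} overwhelm the vanishing tail: $C_m$ grows like $\tfrac{k}{2}\log m$, and the quadratic term came with a prefactor $m$. The resolution is that the Gaussian tail decays fast enough to beat them. For the first term, $|C_m|\,b_m\lesssim (\log m)\,e^{-cm\delta^2}\to 0$, since the tail $b_m$ decays exponentially in $m\delta^2$. For the second, the prefactor $m$ is exactly cancelled by the $1/m$ from the rescaling $\|\theta-\bm{\hat{\theta}}_S^m\|^2_{\mathcal{I}_S}=\tfrac1m\|W\|^2_{\mathcal{I}_S}$, leaving $\tfrac12\,\mathbb{E}_W[\|W\|^2_{\mathcal{I}_S}\mathbf{1}\{\|W\|\ge\sqrt{m}\,\delta\}]$; since $\mathbb{E}_W\|W\|^2_{\mathcal{I}_S}=\operatorname{tr}(\mathcal{I}_S\mathcal{I}_S^{-1})=k<\infty$, dominated convergence drives this truncated moment to $0$ as $\sqrt{m}\,\delta\to\infty$. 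Both bounds are deterministic given $E_m$, so on $E_m$ the integral in (ii) is at most a null sequence, which together with $\Pr[E_m]\to 1$ yields $\int_{\Theta_\Pi^c}\log(\mathrm{d}\bm{\hat{P}}_S^m/\mathrm{d}\lambda)\,\mathrm{d}\bm{\hat{P}}_S^m \goinp 0$.
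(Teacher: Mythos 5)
Your proof is correct, and it takes a genuinely different route from the paper's. The paper argues qualitatively: it writes the density as $C\sqrt{m^k e^{-m\bm{Z}}}$ with $\bm{Z}=(\theta-\bm{\hat{\theta}}_S^m)^\top\mathcal{I}_S(\theta-\bm{\hat{\theta}}_S^m)$, notes that $\bm{Z}\goinp a>0$ pointwise for each fixed $\theta\neq\theta^*$, concludes that $\bm{\hat{p}}_S^m$ (and, via $\lim_{x\to 0^+}x\log x=0$, also $\bm{\hat{p}}_S^m\log\bm{\hat{p}}_S^m$) vanishes pointwise on $\Theta_\Pi^c$, and then invokes dominated convergence. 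You instead localize on the event $E_m=\{\|\bm{\hat{\theta}}_S^m-\theta^*\|<\delta\}$ and reduce everything, after the rescaling $W=\sqrt{m}(\theta-\bm{\hat{\theta}}_S^m)\sim\mathcal{N}(0,\mathcal{I}_S^{-1})$, to deterministic Gaussian tail estimates: the tail mass $b_m\le e^{-cm\delta^2}$ beats the $\tfrac{k}{2}\log m$ growth of the normalizing constant, and the $m$ prefactor on the quadratic term is exactly cancelled by the rescaling, leaving a truncated second moment of a fixed Gaussian with total mass $\operatorname{tr}(\mathcal{I}_S\mathcal{I}_S^{-1})=k$, which vanishes by dominated convergence. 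Your version buys rigor precisely where the paper is loose: the paper's assertion that pointwise convergence of $\bm{\hat{p}}_S^m$ plus dominated convergence gives $\bm{\hat{P}}_S^m(A)\goinp 0$ for \emph{any} measurable $A$ with $\theta^*\notin A$ is false as stated (take $A$ a punctured ball around $\theta^*$: the density vanishes pointwise on $A$ yet $\bm{\hat{P}}_S^m(A)\to 1$), and no dominating function uniform in $m$ and in the random mean is ever exhibited on the unbounded set $\Theta_\Pi^c$. Your explicit use of the interior-point consequence of \textbf{A4} (a ball $B(\theta^*,2\delta)\subseteq\Theta_\Pi$, so $\Theta_\Pi^c$ is at positive distance from $\theta^*$) together with the event $E_m$ supplies exactly the missing uniformity, and as a bonus yields explicit rates. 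What the paper's route buys in exchange is brevity: the same pointwise mechanism handles part (ii) with a one-line $x\log x$ remark, whereas you need the explicit log-density decomposition into $C_m\,\bm{\hat{P}}_S^m(\Theta_\Pi^c)$ and the truncated moment term.
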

\begin{proof} (i) The density of $\displaystyle{\bm{\hat{P}}_S^m := \mathcal{N}\left(\bm{\hat{\theta}}_S^m, \frac{1}{m}\mathcal{I}_S^{-1}\right)}$, denoted by $\bm{\hat{p}}_S^m(\theta)$, is given by
\begin{align}
	\frac{\displaystyle\exp\left\{ -\frac{1}{2}(\theta - \bm{\hat{\theta}}_S^m)^\top \left(\frac{1}{m}\mathcal{I}_S^{-1}\right)^{-1} (\theta - \bm{\hat{\theta}}_S^m) \right\}}{\displaystyle\sqrt{(2\pi)^k \left|\frac{1}{m}\mathcal{I}_S^{-1}\right|}} & = \frac{\displaystyle m^{k/2} \exp\left\{ -\frac{m}{2}(\theta - \bm{\hat{\theta}}_S^m)^\top \mathcal{I}_S (\theta - \bm{\hat{\theta}}_S^m) \right\}}{\displaystyle\sqrt{(2\pi)^k\left|\mathcal{I}_S^{-1}\right|}} \nonumber\\
	& = C \sqrt{\frac{m^{k}}{\exp({m\cdot\bm{Z}})}} \nonumber
\end{align}
for some constant $C>0$ and nonnegative random variables $\bm{Z}:= (\theta - \bm{\hat{\theta}}_S^m)^\top \mathcal{I}_S (\theta - \bm{\hat{\theta}}_S^m)$. Since $\bm{\hat{\theta}}_S^m \goinp \theta^*$, $\bm{Z} \goinp 0$ if $\theta = \theta^*$, and $\bm{Z}\goinp a$ for some $a>0$ otherwise. This means that for any measurable set $A$ s.t.~$\theta^*\notin A$, the function $\bm{\hat{p}}_S^m$ on $A$ converges pointwise to the zero function, and thus
\[
\bm{\hat{P}}_S^m(A) = \int_A \bm{\hat{p}}_S^m \text{d}\lambda \goinp \int_A 0\, \text{d}\lambda = 0
\]
by the dominated convergence theorem. By Assumption \textbf{A4}, $\theta^*\notin\Theta_\Pi^c$, and therefore
\[
\int_{\Theta_\Pi^c} \text{d}\bm{\hat{P}}_S^m \goinp 0 \quad \text{and} \quad \int_{\Theta_\Pi} \text{d}\bm{\hat{P}}_S^m = 1 - \int_{\Theta_\Pi^c} \text{d}\bm{\hat{P}}_S^m \goinp 1\ .
\]

(ii) Note that ${\lim_{x\to 0^+}} x \log x = \lim_{x\to 0^+} (\log x)/(1/x) \overset{\text{L'H}}{=} \lim_{x\to 0^+}({1/x})/({-1/x^2}) = \lim_{x\to 0^+} -x = 0$. From (i), $\bm{\hat{p}}_S^m$ (on the set $\Theta_\Pi^c$) converges pointwise to the zero function and therefore, so is $\bm{\hat{p}}_S^m\log\bm{\hat{p}}_S^m$. Again, by the dominated convergence theorem, we have the required result.
\end{proof}

\subsection*{Theorem \ref{thm:BvM} (Bernstein-von Mises)}

The conditions needed for Bernstein-von Mises to hold in a coalitional setting are as follows: (i) Twice differentiability of $\mathcal{L}_S$ w.r.t.~$\theta$: This is fulfilled since by Assumption \textbf{A2}, the joint likelihood is the product of individual likelihoods, the individual likelihoods are twice differentiable (Assumption \textbf{A1}), and the products of differentiable functions are differentiable. Note that sometimes a weaker condition is given, i.e., the function is differentiable in the quadratic mean at $\theta^*$ (see, for example, thw work of~\citet{vaart1998}). For our purpose, however, the slightly stronger yet simpler condition suffices. (ii) The combined data are distinguishable from the ones generated using a parameter other than $\theta^*$: This is satisfied as long as one player's data is distinguishable (Assumption \textbf{A3}) since its distinguisher can be used as the distinguisher for the combined data. Finally, (iii) the prior satisfies the conditions given in Assumption \textbf{A4}. In addition, note also that by the independence assumption, 
\[
	\mathcal{I_S} = \textnormal{Var}\left[ \frac{\partial}{\partial\theta}\log\mathcal{L}_S(\theta^*;\textbf{X}_S^1) \right] = \sum_{i\in S} \textnormal{Var} \left[ \frac{\partial}{\partial\theta} \log \mathcal{L}_i(\theta^*;\textbf{X}_{i1}) \right] = \sum_{i\in S} \mathcal{I}_i\ .
\]

\subsection*{Lemma \ref{lem:value_convergence}}

We start with a triangle-like inequality for the extended KL divergence:
\begin{claim*}
	Let $F$, $G$, and $K$ be three probability measures on $\Theta$ s.t.~$F\ll K$, while $G$ may not be absolutely continuous w.r.t.~$K$. Let $\Theta_K$ denote the support of $K$ which is compact. Then, there exists a constant $c>0$ s.t.
	\[
	\Bigl|\, \KL{F}{K} - \KLE{G}{K} \,\Bigr| \le c\cdot\TV{F}{G} + \Bigl| H(F) - H(G) \Bigr| + \left| \int_{\Theta_K^c} \log\left(\frac{\text{\emph{d}}G}{\text{\emph{d}}\lambda}\right) \text{\emph{d}}G \right|
	\]
	where $\Theta_K^c$ denotes the complement of $\Theta_K$.
\end{claim*}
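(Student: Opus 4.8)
The plan is to reduce the comparison of $\KL{F}{K}$ and $\KLE{G}{K}$ to a comparison of differential entropies and of cross-entropies against the prior, after which the total-variation term will emerge from a single cross-entropy difference. Throughout I would pass to Lebesgue densities $f:=\mathrm{d}F/\mathrm{d}\lambda$, $g:=\mathrm{d}G/\mathrm{d}\lambda$ and $\kappa:=\mathrm{d}K/\mathrm{d}\lambda$ (the last on $\Theta_K$), which is legitimate since $F,G\ll\lambda$ and $K$ is absolutely continuous on its support. As $F\ll K$, the density $f$ vanishes off $\Theta_K$, so writing $H(F)=-\int f\log f\,\mathrm{d}\lambda$ gives
\[
\KL{F}{K}=\int_{\Theta_K} f\log\tfrac{f}{\kappa}\,\mathrm{d}\lambda=-H(F)-\int_{\Theta_K} f\log\kappa\,\mathrm{d}\lambda.
\]

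First I would identify the Lebesgue decomposition of $G$ with respect to $K$. Since $G\ll\lambda$ with $\kappa>0$ $\lambda$-a.e.\ on $\Theta_K$ while $K(\Theta_K^c)=0$, the absolutely continuous component is simply the restriction $G|_K=G|_{\Theta_K}$ with $\mathrm{d}G|_K/\mathrm{d}K=g/\kappa$, and the singular part is confined to $\Theta_K^c$. Hence, splitting $\int_{\Theta_K} g\log g=-H(G)-\int_{\Theta_K^c} g\log g$,
\[
\KLE{G}{K}=\int_{\Theta_K} g\log\tfrac{g}{\kappa}\,\mathrm{d}\lambda=-H(G)-\int_{\Theta_K^c} g\log g\,\mathrm{d}\lambda-\int_{\Theta_K} g\log\kappa\,\mathrm{d}\lambda.
\]
Subtracting and regrouping then produces the exact identity
\[
\KL{F}{K}-\KLE{G}{K}=\bigl(H(G)-H(F)\bigr)+\int_{\Theta_K}(g-f)\log\kappa\,\mathrm{d}\lambda+\int_{\Theta_K^c} g\log g\,\mathrm{d}\lambda,
\]
in which the first summand is the entropy difference and the third is exactly the residual $\int_{\Theta_K^c}\log(\mathrm{d}G/\mathrm{d}\lambda)\,\mathrm{d}G$ of the claim.

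It then remains to bound the middle cross-entropy difference by $c\cdot\TV{F}{G}$. Here I would use compactness of $\Theta_K$: since $\kappa$ is continuous and bounded away from $0$ and $\infty$ on $\Theta_K$, we have $|\log\kappa|\le c_0$ there, whence
\[
\left|\int_{\Theta_K}(g-f)\log\kappa\,\mathrm{d}\lambda\right|\le c_0\int_{\Theta_K}|g-f|\,\mathrm{d}\lambda\le c_0\int_{\Theta}|g-f|\,\mathrm{d}\lambda=2c_0\,\TV{F}{G},
\]
and taking $c:=2c_0$ together with the triangle inequality over the three summands yields the claim. The genuinely delicate point — and the main obstacle — is the boundedness of $\log\kappa$ on $\Theta_K$: continuity on a compact set bounds $\kappa$ above, but one must also keep $\kappa$ bounded away from zero, since otherwise $\log\kappa\to-\infty$ near $\partial\Theta_K$ and no finite $c_0$ exists. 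For the uniform prior used in the corollary that invokes this claim, $\kappa$ is constant and $c_0=|\log\kappa|$ is immediate; more generally it is the compact-support hypothesis of \textbf{A4} that makes a finite $c_0$ available. I would also verify carefully the density-level claim that $G$'s $K$-singular part is confined to $\Theta_K^c$, as this is precisely what turns $G$'s ``missing mass'' into the controlled residual integral rather than an uncontrolled remainder.
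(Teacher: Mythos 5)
Your proof is correct and takes essentially the same route as the paper's: both rewrite the two divergences as integrals over $\Theta_K$, regroup into the entropy difference, the cross-entropy difference $\int_{\Theta_K}(g-f)\log\kappa\,\mathrm{d}\lambda$, and the residual over $\Theta_K^c$, and then bound the middle term by $c\cdot\TV{F}{G}$ using boundedness of $\log\kappa$ on the compact support $\Theta_K$. Your closing caveat is well taken but applies equally to the paper, which also asserts boundedness of $|\log k|$ from compactness alone even though one additionally needs $k$ bounded away from zero on $\Theta_K$ (immediate for the uniform prior where the claim is invoked, and implicitly assumed for general priors satisfying \textbf{A4}).
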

\begin{proof}
	Let $f$, $g$, and $k$ be the densities of $F$, $G$, and $K$, respectively. Then, by definition,
	\[
	\Bigl|\, \KL{F}{K} - \KLE{G}{K} \,\Bigr| = \left| \int_{\theta\in\Theta_K}f(\theta)\log\left( \frac{f(\theta)}{k(\theta)} \right) \text{d}\theta - \int_{\theta\in\Theta_K} g(\theta)\log\left(\frac{g(\theta)}{k(\theta)}\right) \text{d}\theta \right| .
	\] 
	After rearranging the terms,
	\[
	\left| \int_{\theta\in\Theta_K} [g(\theta) - f(\theta)] \cdot \log k(\theta)\ \text{d}\theta + \int_{\theta\in\Theta_K}f(\theta)\log f(\theta)\ \text{d}\theta - \int_{\theta\in\Theta_K} g(\theta)\log g(\theta)\ \text{d}\theta \right|
	\] 
	which is upper bounded by
	\[
	\int_{\theta\in\Theta_K} \Bigl| g(\theta) - f(\theta) \Bigr| \cdot \Bigl| \log k(\theta) \Bigr| \ \text{d}\theta + \left| -H(F) - \int_{\theta\in\Theta_K} g(\theta)\log g(\theta)\ \text{d}\theta \right|.
	\]
	For the first term, since $\Theta_K$ is compact, the logarithmic part is bounded and
	\begin{eqnarray}
		\int_{\theta\in\Theta_K} \Bigl| g(\theta) - f(\theta) \Bigr| \cdot \Bigl| \log k(\theta) \Bigr| \ \text{d}\theta & \le & \int_{\theta\in\Theta_K} \Bigl| g(\theta) - f(\theta) \Bigr| \cdot \max_{\theta'\in\Theta_K} \Bigl| \log k(\theta') \Bigr| \ \text{d}\theta \nonumber\\
		& \le & \max_{\theta'\in\Theta_K} \Bigl| \log k(\theta') \Bigr| \cdot \int_{\theta\in\Theta} \Bigl| g(\theta) - f(\theta) \Bigr| \ \text{d}\theta \nonumber\\
		& = & \max_{\theta'\in\Theta_K} \Bigl| \log k(\theta') \Bigr| \cdot \left\|\, g - f \,\right\|_{L^1} \nonumber\\
		& = & \max_{\theta'\in\Theta_K} \Bigl| \log k(\theta') \Bigr| \cdot 2 \, \TV{G}{F} \ . \nonumber
	\end{eqnarray}
	For the second term, 
	\begin{eqnarray}
		\left| -H(F) - \int_{\theta\in\Theta_K} g(\theta)\log g(\theta)\ \text{d}\theta \,\right| & = & \left| -H(F) + H(G) + \int_{\theta\in\Theta_K^c} g(\theta)\log g(\theta)\ \text{d}\theta \,\right| \nonumber\\
		& \le & \Bigl| H(F) - H(G) \Bigr| + \left| \int_{\theta\in\Theta_K^c} g(\theta)\log g(\theta)\ \text{d}\theta \right|, \nonumber
	\end{eqnarray}
	as required.
\end{proof}

The claim above shows that the difference between $\KL{F}{K}$ and $\KLE{G}{K}$ can be bounded in terms of the total variation distance between $F$ and $G$, their difference in entropy, and a residual term. Note that when $G$ is also absolutely continuous w.r.t.~$K$, the extended KL divergence reduces to the standard definition and the residual term disappears. By applying the claim, for any $m$ and $\varepsilon > 0$, there exists a constant $c>0$ s.t.
\[
\begin{array}{l}
	\displaystyle\Pr\left[\Bigl|\, \KL{\bm{P}^m_S}{\Pi} - \KLE{\bm{\hat{P}}^m_S}{\Pi} \,\Bigr| > \varepsilon \right] \nonumber\vspace{0.5mm}\\
	\displaystyle\le \Pr\left[ c \cdot \TV{\bm{P}^m_S}{\bm{\hat{P}}^m_S} + \left| H(\bm{P}^m_S) - H(\bm{\hat{P}}^m_S) \right| + \left| \int_{\Theta_\Pi^c} \log\left(\frac{\text{d}\bm{\hat{P}}^m_S}{\text{d}\lambda}\right) \text{d}\bm{\hat{P}}^m_S \right| > \varepsilon \right] \nonumber\vspace{0.5mm}\\
	\displaystyle\le \Pr\left[ \TV{\bm{P}^m_S}{\bm{\hat{P}}^m_S} > \varepsilon/(3c) \right] + \Pr\left[ \left| H(\bm{P}^m_S) - H(\bm{\hat{P}}^m_S) \right| > \varepsilon/3 \right] + \Pr\left[ \left| \int_{\Theta_\Pi^c} \log\left(\frac{\text{d}\bm{\hat{P}}^m_S}{\text{d}\lambda}\right) \text{d}\bm{\hat{P}}^m_S \right| > \varepsilon/3 \right]. \nonumber
\end{array}
\]
Therefore, if the three terms converge in probability to zero, then so is the difference in extended KL divergence. The convergence of the first term (i.e., in total variation distance) is given by Theorem $\ref{thm:BvM}$ and that of the residual term is given by Claim \ref{claim:residuals}(ii). The convergence in entropy is established next.

The convergence properties of (differential) entropy have been studied extensively in the information-theoretic literature with different conditions derived for the convergence to hold in terms of convergence in various measures (e.g., $L^1$-distance, KL divergence, pointwise convergence). Here, we adapted the approach found in \cite{ghourchian2017} that is well-suited to our setting.

\begin{claim*}
	Let $F$ and $G$ be two probability measures where the support $\Theta_F$ of $F$ is compact while $G$ is absolutely continuous. Then,
	\[
		|H(F) - H(G)| \,\,\le\,\, \delta \cdot\log \frac{\lambda(\Theta_F)}{\delta} + \left|\int_{\Theta_F^c}\!\! \log\left(\frac{\text{\emph{d}}G}{\text{\emph{d}}\lambda}\right) \text{\emph{d}}G\right|
	\]
	for some $0\le\delta\le2\TV{F}{G}\ .$
\end{claim*}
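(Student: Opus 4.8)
The plan is to reduce the statement to a one-dimensional pointwise estimate on the map $\phi(t) := -t\log t$ and then integrate it with Jensen's inequality. Write $f := \mathrm{d}F/\mathrm{d}\lambda$ and $g := \mathrm{d}G/\mathrm{d}\lambda$, set $a := \lambda(\Theta_F)$, and note that since $f$ vanishes off $\Theta_F$ we have $H(F) = \int_{\Theta_F}\phi(f)\,\mathrm{d}\lambda$ while $H(G) = \int_{\Theta_F}\phi(g)\,\mathrm{d}\lambda + \int_{\Theta_F^c}\phi(g)\,\mathrm{d}\lambda$. Subtracting and using $-\int_{\Theta_F^c}\phi(g) = \int_{\Theta_F^c}\log(\mathrm{d}G/\mathrm{d}\lambda)\,\mathrm{d}G$, the off-support integral is exactly the residual appearing in the statement. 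The core task is then to control the on-support discrepancy $\bigl|\int_{\Theta_F}[\phi(f)-\phi(g)]\,\mathrm{d}\lambda\bigr|$ by $\delta\log(a/\delta)$ with the choice $\delta := \int_{\Theta_F}|f-g|\,\mathrm{d}\lambda$, which satisfies $\delta \le \|f-g\|_{L^1} = 2\,\TV{F}{G}$ as required.

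For that on-support term I would use two properties of $\phi$. First, $\phi$ is concave with $\phi(0)=0$, hence subadditive, which yields the modulus-of-continuity bound $|\phi(x)-\phi(y)| \le \phi(|x-y|)$ whenever $x,y$ lie in the range where $\phi$ is well behaved; applied pointwise this gives $\int_{\Theta_F}|\phi(f)-\phi(g)|\,\mathrm{d}\lambda \le \int_{\Theta_F}\phi(|f-g|)\,\mathrm{d}\lambda$. Second, viewing $\tfrac1a\lambda|_{\Theta_F}$ as a probability measure and applying Jensen's inequality to the concave $\phi$ gives $\int_{\Theta_F}\phi(|f-g|)\,\mathrm{d}\lambda \le a\,\phi\!\bigl(\tfrac1a\int_{\Theta_F}|f-g|\,\mathrm{d}\lambda\bigr) = a\,\phi(\delta/a) = \delta\log(a/\delta)$. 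This is precisely the maximum-entropy statement that, for a fixed amount of displaced mass, the entropy change is largest when the mass is spread uniformly over $\Theta_F$, and it is the source of the $\lambda(\Theta_F)$ factor, which plays the role of the alphabet size in the classical Fannes/Csisz\'ar uniform-continuity bound for Shannon entropy.

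The hard part will be the range restriction on the pointwise modulus bound. The inequality $|\phi(x)-\phi(y)|\le\phi(|x-y|)$ is clean only when the densities are bounded (essentially by $1$); for large density values $\phi$ is decreasing and no longer subadditive in the needed direction, so the naive estimate can overshoot, and indeed one can exhibit $F,G$ for which $\bigl|\int_{\Theta_F}[\phi(f)-\phi(g)]\,\mathrm{d}\lambda\bigr|$ exceeds every admissible $\delta\log(a/\delta)$, the claim surviving only through cancellation with the residual. I would therefore split $\Theta_F$ into the region where $\max(f,g)\le 1$, on which the modulus-plus-Jensen argument applies verbatim, and its complement; on the latter I would not bound $\phi(f)-\phi(g)$ in isolation but keep it together with the off-support contribution, exploiting that the excess entropy generated where $g$ is large is exactly what the residual $\int_{\Theta_F^c}\log(\mathrm{d}G/\mathrm{d}\lambda)\,\mathrm{d}G$ is designed to absorb. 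Arranging this bookkeeping so that the two pieces recombine into the single clean bound $\delta\log(a/\delta)$ plus the stated residual, rather than two separately uncontrolled quantities, is the delicate step; in the intended application it is eased by the fact that, by Theorem~\ref{thm:BvM}, $f$ and $g$ are uniformly close wherever both are large, so the high-density contribution is a lower-order term that vanishes in probability alongside $\TV{F}{G}$.
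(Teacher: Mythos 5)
Your first half coincides with the paper's own proof: the same decomposition of $H(G)$ into its $\Theta_F$ and $\Theta_F^c$ parts (the latter becoming the residual), essentially the same choice $\delta := \int_{\Theta_F}|f-g|\,\text{d}\lambda \le \|f-g\|_{L^1} = 2\TV{F}{G}$, and your Jensen step is an equivalent repackaging of the paper's device of normalizing $|f-g|/\delta$ into a probability density $z$ on $\Theta_F$ and invoking the maximum-entropy bound $H(Z)\le\log\lambda(\Theta_F)$; both routes yield exactly $\delta\log(\lambda(\Theta_F)/\delta)$. You are also right that the pointwise bound $|\phi(x)-\phi(y)|\le\phi(|x-y|)$ for your $\phi(t)=-t\log t$ (Theorem 17.3.3 of Cover and Thomas, which the paper cites) is only available when $x,y\in[0,1/2]$, and that handling unbounded densities is the crux.

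Your plan for that crux, however, fails, and it is the one place you depart from the paper. The residual $\int_{\Theta_F^c}\log(\text{d}G/\text{d}\lambda)\,\text{d}G$ lives entirely on $\Theta_F^c$: it is identically zero whenever $G(\Theta_F)=1$, no matter how large $g$ is inside $\Theta_F$, so it cannot absorb any high-density on-support excess. Concretely, take $F$ uniform on $[0,1]$ and $g=n\cdot\mathbf{1}_{[0,1/n]}$: the residual vanishes, every admissible $\delta\log(\lambda(\Theta_F)/\delta)$ is at most $1/e$, yet the on-support entropy discrepancy is $\log n$ --- there is no cancellation available for your bookkeeping to arrange. The paper's actual fix is a change of variables rather than a split-and-absorb argument: with $M$ the larger of the two density suprema, rescale $\theta\mapsto 2M\theta$, which leaves both the $L^1$ distance and the entropy \emph{difference} invariant while forcing both densities into $[0,1/2]$, so the Cover--Thomas inequality then applies on the whole support at once. (Your instinct that something is strained here is not baseless: the rescaling inflates the support's Lebesgue measure by $(2M)^k$, a factor the paper's final line nevertheless records as $\lambda(\Theta_F)$; but the intended repair is scaling, not absorption into the residual.) Finally, your closing appeal to Theorem~\ref{thm:BvM} is out of scope: the claim quantifies over arbitrary $F$ and $G$ subject only to the stated support and continuity hypotheses, so its proof cannot borrow closeness of $f$ and $g$ from the downstream application.
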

\begin{proof}
Let $f$ and $g$ be the densities of $F$ and $G$, respectively. Furthermore, let $M$ be defined as
\[
	M := \max\left\{\sup_{\theta\in\Theta_F} f(\theta) \,,\, \sup_{\theta\in\mathbb{R}^k} g(\theta)\right\}.
\]
We first apply a change of variables to $\bm{Y}_F\sim F$ and $\bm{Y}_G\sim G$ by scaling them to obtain the random variables $2M\bm{Y}_F$ and $2M\bm{Y}_G$, respectively. Let $\hat{F}$ and $\hat{G}$ be the new measures, and $\hat{f}$ and $\hat{g}$ be their densities, respectively. Then, 
\[\begin{array}{ll}
	\displaystyle{\hat{f}(\theta) = \frac{1}{2M}\, f\! \left(\frac{\theta}{2M}\right)} & \quad\text{with support } \Theta_{\hat{F}} := \{2M\theta\}_{\theta\in\Theta_F}\ , \\
	\displaystyle{\hat{g}(\theta) = \frac{1}{2M} \,g\! \left(\frac{\theta}{2M}\right)} & \quad\text{with support } \mathbb{R}^k\ .
\end{array}\]
By scaling, we make the densities' values lie in between $0$ and $1/2$ while keeping both their $L^1$-distance and difference in entropy the same (since we are scaling both by the same scalar). By letting $u:=\theta/(2M) \Rightarrow \text{d}\theta = 2M\, \text{d}u$, 
\[
	\int_{\theta\in\mathbb{R}^k}\! |\hat{f}(\theta) - \hat{g}(\theta)|\ \text{d}\theta = \frac{1}{2M}\int_{\theta\in\mathbb{R}^k}\! \left|f\left(\frac{\theta}{2M}\right) - g\left(\frac{\theta}{2M}\right)\right| \text{d}\theta = \int_{u\in\mathbb{R}^k}|f(u) - g(u)|\ \text{d}u
\]
and
\[
	\left|H(\hat{F}) - H(\hat{G})\right| = |H(F) + \log(2M) - H(G) - \log(2M)| = |H(F) - H(G)|\ .
\]
The purpose of scaling the densities to interval $[0,1/2]$ is so that we can apply the following inequality later: If $x,y\in[0,1/2]$, then
\begin{equation}\label{useful_inequality}
	|x\log x - y\log y| \le |x-y|\log\frac{1}{|x-y|}\ .
\end{equation}
The proof of the inequality can be found in Theorem $17.3.3$ (Equation $17.27$) of \citet{cover2006}. Note that here we are indeed trying to generalize Theorem $17.3.3$ to the continuous setting. We now construct another probability measure $Z$ on $\Theta_F$ with the following density:
\[
	z(\theta) := \left\{\begin{array}{ll} 
	\displaystyle{\frac{|\hat{f}(\theta) - \hat{g}(\theta)|}{\delta}} & \hspace{1cm} \text{if}\ \theta\in\Theta_F\ , \\
	0 & \hspace{1cm} \text{otherwise}\ ;
	\end{array}\right.
\]
where $\displaystyle{\delta:=\int_{\theta\in\Theta_F}\!\!|\hat{f}(\theta) - \hat{g}(\theta)| \ \text{d}\theta}$ is the normalizing factor. Note that
\[
	\delta \le \|\hat{f} - \hat{g}\|_{L^1} = \|f - g\|_{L^1} = 2\TV{F}{G}\ .
\]
Putting all the above together, 
\begin{align}
	|H(F) - H(G)| & = \left| \int_{\Theta_F}f(\theta)\log f(\theta) \ \text{d}\theta - \int_{\Theta_F} g(\theta)\log g(\theta) \ \text{d}\theta - \int_{\Theta_F^c} g(\theta)\log g(\theta)\ \text{d}\theta \right| \nonumber\\
	& \le \int_{\Theta_F} \Bigl| \hat{f}(\theta)\log \hat{f}(\theta) - \hat{g}(\theta)\log \hat{g}(\theta) \Bigr| \ \text{d}\theta + \left| \int_{\Theta_F^c} g(\theta)\log g(\theta)\ \text{d}\theta \right| =: R \nonumber\\
	& \le \int_{\Theta_F} |\hat{f}(\theta)-\hat{g}(\theta)|\cdot\log\frac{1}{|\hat{f}(\theta)-\hat{g}(\theta)|} \ \text{d}\theta + R \hspace{1cm} \text{(by Inequality~\ref{useful_inequality})} \nonumber\\
	& = \int_{\Theta_F} \delta z(\theta)\cdot\log\frac{1}{\delta z(\theta)} \ \text{d}\theta + R \hspace{1cm} \text{(by Definition of $z$)} \nonumber\\
	& = -\delta \log \delta \int_{\Theta_F} z(\theta) \ \text{d}\theta - \delta\int_{\Theta_F} z(\theta) \log z(\theta) \ \text{d}\theta + R\nonumber\\
	& = -\delta\log\delta + \delta H(Z) + R \phantom{\int}\nonumber\\
	& \le \delta\log\frac{\lambda(\Theta_F)}{\delta} + R \ ,\nonumber 
\end{align}
as required.
\end{proof}
By applying the claim to our setting,
\[
|H(\bm{P}_S^m) - H(\bm{\hat{P}}_S^m)| \le \bm{\delta}^m \log\frac{\lambda(\Theta_\Pi))}{\bm{\delta}^m} + \left|\int_{\Theta_\Pi^c}\log\left(\frac{\text{d}\bm{\hat{P}}_S^m}{\text{d}\lambda}\right) \text{d}\bm{\hat{P}}_S^m\right|
\]
which converges in probability to zero since $\bm{\delta}^m \le 2\TV{\bm{P}_S^m}{\bm{\hat{P}}_S^m} \goinp 0$, by Theorem~\ref{thm:BvM}, and the residual term as well, by Claim~\ref{claim:residuals}(ii). This completes the proof.

\subsection*{Corollary \ref{cor:uniform_prior}}

By applying the definition of the extended KL divergence, 
\begin{eqnarray}
	\KLE{\bm{\hat{P}}^m_S}{U(\Theta)} & = & \int_{\Theta_U} \log\left(\frac{\bm{\hat{p}}^m_S}{1/\lambda(\Theta_U)}\right) \text{d}\bm{\hat{P}}^m_S \nonumber\\
	& = & \int_{\Theta_U} \log(\bm{\hat{p}}^m_S)\  \text{d}\bm{\hat{P}}^m_S + \log\lambda(\Theta_U)\int_{\Theta_U}\ \text{d}\bm{\hat{P}}^m_S \nonumber\\
	& = & -H(\bm{\hat{P}}^m_S) - \int_{\Theta_U^c} \log(\bm{\hat{p}}^m_S)\  \text{d}\bm{\hat{P}}^m_S + \log\lambda(\Theta_U)\int_{\Theta_U}\ \text{d}\bm{\hat{P}}^m_S \nonumber\\
	& = & \frac{k}{2} \log \left(\frac{m}{2\pi e}\right) + \frac{1}{2}\log|\mathcal{I}_S| - \int_{\Theta_U^c} \log(\bm{\hat{p}}^m_S) \  \text{d}\bm{\hat{P}}^m_S +  \log\lambda(\Theta_U)\int_{\Theta_U} \  \text{d}\bm{\hat{P}}^m_S\ . \nonumber
\end{eqnarray}
From Claim~\ref{claim:residuals}, 
\[
\int_{\Theta_U^c} \log(\bm{\hat{p}}^m_S)\  \text{d}\bm{\hat{P}}^m_S \goinp 0 \quad\text{and}\quad \int_{\Theta_U} \  \text{d}\bm{\hat{P}}^m_S \goinp 1
\]
which establish that 
\[
\left| \KLE{\bm{\hat{P}}^m_S}{U(\Theta)} - \left( \frac{k}{2} \log \left(\frac{m}{2\pi e}\right) + \log\lambda(\Theta_U) + \frac{1}{2}\log|\mathcal{I}_S| \right) \right| \goinp 0.
\]
Together with Lemma \ref{lem:value_convergence}, this gives us the required result.

\subsection*{Theorem \ref{thm:shapley_convergence}}

The characteristic function of a cooperative game is a mapping from $2^N$ to $\mathbb{R}$. If we arrange the values of all coalitions in a vector, then we get a $2^{|N|}$-dimensional real-valued vector. In other words, the set of all possible characteristic functions is the  $\mathbb{R}^{2^{|N|}-1}$ space; the dimension is one less because the value of an empty coalition is always zero. Note that in general, negative values are permitted. By definition, the Shapley/Banzhaf value $\phi$ is a linear transformation from the  $\mathbb{R}^{2^{|N|}-1}$ space to the  $\mathbb{R}^{|N|}$ space and is therefore continuous. Now, define the function $\Delta_{ij}(v)$ for any players $i\neq j$ as
\begin{align}
\Delta_{ij} & : \mathbb{R}^{2^{|N|}-1} \to \mathbb{R} \ ,\nonumber\\
v & \mapsto \phi(i,v) - \phi(j,v)\ . \nonumber
\end{align}
In other words, given a characteristic function $v$, $\Delta_{ij}(v)$ computes the difference in Shapley value between players $i$ and $j$. Note that $\Delta_{ij}$ is also a linear transform, by definition, and is thus continuous. So, by the \emph{continuous mapping theorem} (CMT), if there is a parametric Bayesian learning game $\V^m$ s.t.~$\V^m\goinp v$ for some game $v$, then $\Delta_{ij}(\V^m)\goinp \Delta_{ij}(v)$. From Corollary~\ref{cor:uniform_prior}, $\V^m$ converges in probability to a game that can be decomposed into three games $V_1 + V_2 + V$. $V_1$ and $V_2$ are vectors where all their entries are the same. By definition, if $v$ is a vector with the same entries, then $\Delta_{ij}(v) = 0$. Since $V$ is the game $S\mapsto 0.5\log|\mathcal{I}_S|$, 
\begin{align}
	\Delta_{ij}(\V^m) & = \Delta_{ij}(\V^m) - \Delta_{ij}(V_1) - \Delta_{ij}(V_2) && \text{(by Definitions of $\Delta_{ij}(V_1)$ and $\Delta_{ij}(V_2)$)} \nonumber\\ 
	& = \Delta_{ij}(\V^m - (V_1 + V_2)) && \text{(by linearity of $\Delta_{ij}$)} \nonumber\\
	& \goinp \Delta_{ij}(V) && \text{(by Corollary \ref{cor:uniform_prior} \& CMT)}\ , \nonumber
\end{align}
as required.

\subsection*{Corollary~\ref{thm:algorithm_convergence}}

Note that since there are two players, the characteristic function can be represented by a vector of three values. Consider a $2$-player cooperative game where the characteristic function is given by $V:=(0.5\log|\mathcal{I}_1|, 0.5\log|\mathcal{I}_2|, 0.5\log|\mathcal{I}_{12}|)$. Suppose further that the determinant of both players' Fisher information is the same, i.e., $|\mathcal{I}_1| = |\mathcal{I}_2|$. By the symmetric property, if the marginal contributions of the players to any other coalition are the same, then their Shapley values are the same. The only other coalition is the empty coalition and
\[
V(\{1\}) - V(\{\}) = \frac{1}{2}\log|\mathcal{I}_1| = \frac{1}{2}\log|\mathcal{I}_2| = V(\{2\}) - V(\{\})\ . 
\]
So, $\phi(1,V) - \phi(2,V) = 0$. So, if $\V^m$ is a parametric Bayesian games for two players s.t.~$|\mathcal{I}_1| = |\mathcal{I}_2|$, then 
\begin{align}
\phi(1, \V^m) - \phi(2, \V^m) & \goinp \phi(1,V) - \phi(2,V) && \text{(by Theorem \ref{thm:shapley_convergence})} \nonumber\\
& = 0\ . \nonumber
\end{align}

In general, in the case of $|\mathcal{I}_1| \neq |\mathcal{I}_2|$, if we bundle $r_1$ of player $1$'s data points together and treat them as a single point, we get a new Fisher information $r_1\mathcal{I}_1$ (similarly, $r_2\mathcal{I}_2$ for player $2$). Therefore, if we set $r_1 : r_2 = {|\mathcal{I}_2|}^{1/k} : {|\mathcal{I}_1|}^{1/k}$, then
\[
|r_1\mathcal{I}_1| = r_1^k|\mathcal{I}_1| = \left(\sqrt[k]{\frac{|\mathcal{I}_2|}{|\mathcal{I}_1|}}\ r_2\right)^k \cdot |\mathcal{I}_1| = r_2^k|\mathcal{I}_2| = |r_2\mathcal{I}_2|\ .
\]
In other words, the new Fisher information are the same. This is equivalent to having the total number of points shared by the players be proportional to $r_1 : r_2$, which implies that if in step $3$ of the framework, both $\hat{\mathcal{I}}_1 \goinp \mathcal{I}_1$ and $\hat{\mathcal{I}}_2 \goinp \mathcal{I}_2$, then $|r_1\mathcal{I}_1| - |r_2\mathcal{I}_2| \goinp 0$ and the conclusion follows. In step $2$, as the number of iterations increases, we have $\hat{\mathcal{I}}_1 \goinp \mathcal{I}_1(\bar{\theta})$ and $\hat{\mathcal{I}}_2\goinp\mathcal{I}_2(\bar{\theta})$, by the weak law of large numbers. Also, since we assume that $\bar{\theta}$ is a consistent estimator, by the continuous mapping theorem, $\hat{\mathcal{I}}_1\goinp\mathcal{I}_1$ and $\hat{\mathcal{I}}_2\goinp\mathcal{I}_2$, as required.

\section{Additional Experimental Details and Results} \label{appendix:experiments}

\subsection*{Sharing Data Points in Learned Features with Neural Networks}
King county housing sales prediction~(KingH)\footnote{\url{https://www.kaggle.com/harlfoxem/housesalesprediction}}  contains over $21$K data of $21$ features. Age prediction from facial images~(FaceA)~\cite{zhifei2017cvpr-FaceA} contains over $20$K face images of various sizes for people with ages between $0$ and $116$.
For KingH, we train a neural network with two fully-connected layers of $64$ and $10$ hidden units with \emph{rectified linear units} (ReLU) as the activation function. 
For FaceA, we train a convolutional neural network with $3$ convolutional layers~(with thefirst $2$ followed by batch normalization and max pooling) and $3$ fully-connected layers~(respectively, with $1024$, $64$, $10$ hidden units).
Then, we pass the original data through the respective trained neural network models to construct the features obtained after the last fully-connected layer of $10$ hidden units to construct features in the learned space of $10$ dimensions. We find that there is a redundant extracted feature for the FaceA dataset~(containing all zeros), so it is dropped and the dimension for the extracted features is $9$.

\textbf{Details of experiment~1 on BLR.}
We will first describe how the real-world datasets are used by players P$1$ and P$2$. P$1$ has a randomly selected subset (of size P$1$\_data) from the original dataset. P$2$ has a randomly selected subset which is a fraction (P$2$\_ratio) of the original dataset. Furthermore, P$2$'s data has a proportion~(P$2$\_nan) of missing values which are imputed with feature-wise average values.

Subsequently, a single data point $\mathbf{X}_{1j}$ of P$1$ is obtained as follows: P$1$ samples P$1$\_sample\_size data points~(e.g., housing records for CaliH/KingH) with replacement from a restricted subset~(of size P$1$\_data) of the original data, and uses these sampled P$1$\_sample\_size data points to compute a least-squares solution as P$1$'s one data point $\mathbf{X}_{1j}$ in the learning game. A single data point $\mathbf{X}_{2j}$ of P$2$ is obtained from observing a noisy sample from $\mathcal{N}(\hat \theta_2, \mathbf{1} \times \sigma^2)$ where $\hat \theta_2$ is the least-squares solution from P$2$'s data~(with missing values imputed) and $\sigma$ is known.


\textbf{Additional results for 2-player.}
Figs.~\ref{fig:KingH} and~\ref{fig:MNIST_VAE} show the Shapley values and the cumulative counts for KingH and MNIST\_VAE, respectively. The statistics on $\delta$ are tabulated in Tables~\ref{table:CaliH-additional},~\ref{table:FaceA},~\ref{table:KingH}, and~\ref{table:MNIST_VAE} for the additional results on CaliH, FaceA, KingH, and MNIST\_VAE, respectively. Fig.~\ref{fig:KingH} shows a similar trend with previous results and verifies our result, while Fig.~\ref{fig:MNIST_VAE} displays more fluctuations within the overall converging trend. Note from the setting for Fig.~\ref{fig:MNIST_VAE} that though P$1$ has fewer data than P$2$, P$1$'s data are more balanced between the two digits.
The fluctuations in Fig.~\ref{fig:MNIST_VAE} suggest that experiment~$2$ is more difficult. This may be attributed to the fact that the parameter estimates for linear regression~(in experiment~$1$) are more analytically tractable than the mean estimates in a learned latent space implicitly represented by a VAE.

From Table~\ref{table:FaceA}, we observe that for FaceA, the default setting seems to `favor' P$2$: P$1$ needs more data to be able to contribute meaningfully~(left table) while increasing P$2$'s data further makes it hard for the Shapley values to converge~(right table).
On the other hand, from Table~\ref{table:KingH}, we observe that for KingH, the default setting for both P$1$ and P$2$ `favors' P$1$: Increasing P$1$'s data further makes it harder for the Shapley values to converge~(left table) and P$2$ needs more data to be able to meaningfully contribute~(right table).

From Table~\ref{table:MNIST_VAE}~(left), we observe that as the bias in both players' data is comparable~(first three rows), it takes fewer iterations for their Shapley values to converge. However, as P$1$'s data becomes more balanced~(last two rows), it becomes more difficult for P$2$ to collect sufficient data points to match, so it is more difficult for their Shapley values to converge. 
From Table~\ref{table:MNIST_VAE}~(right), we observe that if the quantities of both players' data differ significantly, it is in general difficult for their Shapley values to converge since all except one case~(second row) were not able to converge within $50$ iterations.

\begin{figure*}
    \begin{minipage}{0.47\textwidth}
        \centering
        \includegraphics[width=0.43\textwidth]{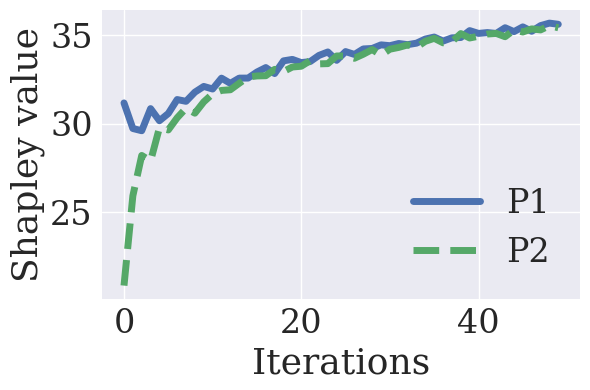}
        \hspace{0.05\textwidth}
        \includegraphics[width=0.43\textwidth]{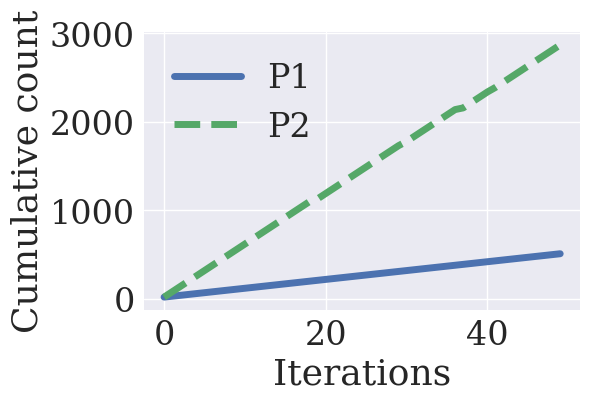}
    \caption{SV~(cumulative count) vs.~iterations on \textbf{KingH} where 
    P$1$\_data\ $=1000$, P$1$\_sample\_size\ $=100$, P$1$'s sampling is iid, and P$2$\_ratio\ $=$\ P$2$\_nan\ $=0.1$.
    }
    \label{fig:KingH}
    \end{minipage}
    \hfill
    \begin{minipage}{0.47\textwidth}
        \centering
        \includegraphics[width=0.43\textwidth]{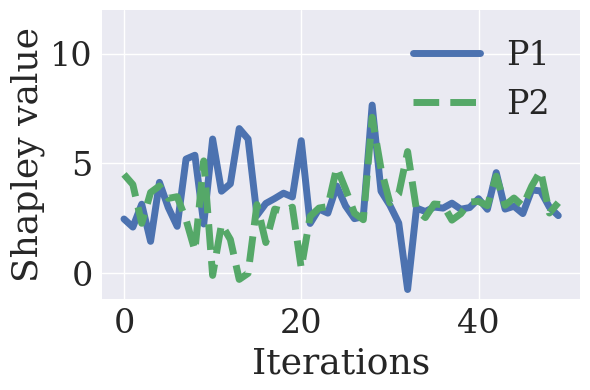}
        \hspace{0.05\textwidth}
        \includegraphics[width=0.43\textwidth]{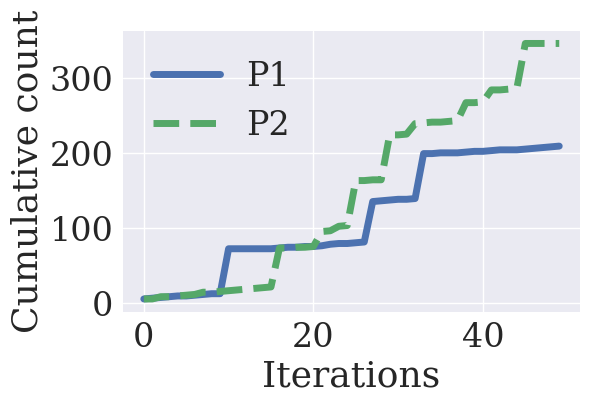}
    \caption{SV~(cumulative count) vs.~iterations on \textbf{MNIST\_VAE} where
    P$1$\_data\ $=1000$, P$2$\_data\ $=5000$, P$1$\_ratio\ $=0.2$, and P$2$\_ratio\ $=0.9$.
    }
    \label{fig:MNIST_VAE}
    \end{minipage}
\end{figure*}

\begin{table}
\centering
\caption{Statistics on $\delta$ for \textbf{CaliH} with varying P$2$'s contributions.
Iter denotes the smallest number of iterations for $\delta$ to be smaller than $0.1$ for $5$ consecutive iterations and * denotes cases where it is not satisfied. Also,
P$1$\_data\ $=5000$, P$1$\_sample\_size\ $=500$, and P$1$'s sampling is lvg\_iid.
}
\setlength{\tabcolsep}{4pt}
\resizebox{0.45\textwidth}{!}{
\begin{tabular}{rrrrrl}
\toprule
  P$2$\_nan & P$2$\_ratio & Lowest &  Average &  StDev & Iter \\
\midrule
    0.05 &   0.01 & 0.000 &    0.041 &  0.009 &            5 \\
    0.10 &   0.01 &    0.001 &    0.018 &  0.009 &            5 \\
    0.40 & 0.01 &      0.000 &    0.008 &  0.006 &            5 \\
    0.05 & 0.10 &      0.013 &    0.025 &  0.005 &            5 \\
    0.10 &   0.10 &    0.016 &    0.032 &  0.006 &            5 \\
    0.40 &   0.10 &    0.000 &    0.007 &  0.006 &            5 \\
    0.05 &   0.50 &    0.063 &    0.074 &  0.009 &            8 \\
    0.10 &   0.50 &    0.093 &    0.109 &  0.009 &          50* \\
    0.40 &   0.50 &    0.118 &    0.133 &  0.010 &          50* \\
\bottomrule
\end{tabular}
}
\label{table:CaliH-additional}
\end{table}

\begin{table}
\caption{Statistics on $\delta$ for \textbf{FaceA} with varying P$1$'s~(P$2$'s) contributions on the left~(right). Iter denotes the smallest number of iterations for $\delta$ to be smaller than $0.1$ for $5$ consecutive iterations and * denotes cases where it is not satisfied. 
For left table, P$2$\_ratio\ $=$\ P$2$\_nan\ $=0.1$.
For right table, P$1$\_data\ $=10000$, P$1$\_sample\_size\ $=100$, and P$1$'s sampling is iid.
}
\setlength{\tabcolsep}{4pt}
\resizebox{0.53\textwidth}{!}{
\begin{tabular}{rrlrrrl}
\toprule
    P$1$\_data &   P$1$\_sample\_size & sampling &  Lowest &  Average &  StDev &   Iter \\
\midrule
    1000 &        10 &         iid &   0.392 &    0.427 &  0.031 &          50* \\
    1000 &        10 &     lvg\_iid &   0.397 &    0.447 &  0.039 &          50* \\
    1000 &       100 &         iid &   0.006 &    0.025 &  0.006 &            5 \\
    1000 &       100 &     lvg\_iid &   0.000 &    0.005 &  0.004 &            5 \\
    1000 &       500 &         iid &   0.055 &    0.067 &  0.007 &            5 \\
    1000 &       500 &     lvg\_iid &   0.017 &    0.036 &  0.008 &            5 \\
    5000 &        10 &         iid &   0.377 &    0.429 &  0.033 &          50* \\
    5000 &       500 &         iid &   0.028 &    0.042 &  0.008 &            5 \\
    5000 &       500 &     lvg\_iid &   0.021 &    0.042 &  0.009 &            5 \\
\bottomrule
\end{tabular}}
\hfill
\resizebox{0.42\textwidth}{!}{
\begin{tabular}{rrrrrl}
\toprule
   P$2$\_nan & P$2$\_ratio & Lowest &  Average &  StDev & Iter \\
\midrule
    0.05 &   0.01 &      0.065 &    0.080 &  0.011 &           13 \\
    0.10 &   0.01 &      0.065 &    0.084 &  0.011 &           16 \\
    0.40 &   0.01 &       0.065 &    0.088 &  0.013 &           24 \\
    0.05 &   0.10 &      0.008 &    0.021 &  0.007 &            5 \\
    0.10 &   0.10 &      0.016 &    0.030 &  0.006 &            5 \\
    0.40 &   0.10 &       0.029 &    0.044 &  0.007 &            5 \\
    0.05 &   0.50 &       0.097 &    0.115 &  0.014 &          50* \\
    0.10 &   0.50 &       0.123 &    0.142 &  0.014 &          50* \\
    0.40 &   0.50 &      0.167 &    0.187 &  0.016 &          50* \\
\bottomrule
\end{tabular}
}
\label{table:FaceA}
\end{table}

\begin{table}
\setlength{\tabcolsep}{4pt}
\caption{Statistics on $\delta$ for \textbf{KingH} with varying P$1$'s~(P$2$'s) contributions on the left~(right). Iter denotes the smallest number of iterations for $\delta$ to be smaller than $0.1$ for $5$ consecutive iterations and * denotes cases where it is not satisfied.
For left table, P$2$\_ratio\ $=$\ P$2$\_nan\ $=0.1$.
For right table, P$1$\_data\ $=10000$, P$1$\_sample\_size\ $=100$, and P$1$'s sampling is iid.
}
\resizebox{0.53\textwidth}{!}{
\begin{tabular}{rrlrrrl}
\toprule
    P$1$\_data &   P$1$\_sample\_size & sampling &  Lowest &  Average &  StDev &   Iter \\
\midrule
    1000 &        10 &         iid &   0.000 &    0.015 &  0.011 &            5 \\
    1000 &        10 &     lvg\_iid &   0.000 &    0.015 &  0.014 &            5 \\
    1000 &       100 &         iid &   0.000 &    0.005 &  0.004 &            5 \\
    1000 &       100 &     lvg\_iid &   0.000 &    0.005 &  0.005 &            5 \\
    1000 &       500 &         iid &   0.033 &    0.173 &  0.024 &          50* \\
    1000 &       500 &     lvg\_iid &   0.125 &    0.140 &  0.012 &          50* \\
    5000 &        10 &         iid &   0.000 &    0.015 &  0.013 &            5 \\
    5000 &       500 &         iid &   0.133 &    0.159 &  0.011 &          50* \\
    5000 &       500 &     lvg\_iid &   0.098 &    0.158 &  0.015 &          50* \\
\bottomrule
\end{tabular}}
\hfill
\resizebox{0.42\textwidth}{!}{
\begin{tabular}{rrrrrl}
\toprule
   P$2$\_nan & P$2$\_ratio & Lowest &  Average &  StDev & Iter \\
\midrule
    0.05 & 0.01 &      0.192 &    0.217 &  0.017 &          50* \\
    0.10 & 0.01 &      0.189 &    0.207 &  0.014 &          50* \\
    0.40 & 0.01 &      0.177 &    0.193 &  0.013 &          50* \\
    0.05 & 0.10 &       0.000 &    0.005 &  0.004 &            5 \\
    0.10 & 0.10 &       0.000 &    0.006 &  0.006 &            5 \\
    0.40 & 0.10 &      0.000 &    0.005 &  0.005 &            5 \\
    0.05 & 0.50 &       0.000 &    0.006 &  0.004 &            5 \\
    0.10 & 0.50 &       0.000 &    0.009 &  0.006 &            5 \\
    0.40 & 0.50 &      0.000 &    0.010 &  0.006 &            5 \\
\bottomrule
\end{tabular}
}
\label{table:KingH}
\end{table}

\begin{table*}
\setlength{\tabcolsep}{4pt}
\caption{Statistics on $\delta$ for \textbf{MNIST\_VAE} with varying P$1$'s~(P$2$'s) contributions on the left~(right). Iter denotes the smallest number of iterations for $\delta$ to be smaller than $0.1$ for $5$ consecutive iterations and * denotes cases where it is not satisfied.
}
\resizebox{0.50\textwidth}{!}{
\begin{tabular}{cclrrrl}
\toprule
 P$1$/P$2$\_data &  P$1$/P$2$\_ratio &  Lowest &  Average &  StDev & Iter \\
\midrule
    1000 / 1000 &         0.1 / 0.9 &   0.000 &    0.356 &  0.388 &           28 \\
    1000 / 1000 &         0.2 / 0.9 &   0.002 &    0.287 &  0.376 &           19 \\
    1000 / 1000 &         0.3 / 0.9 &   0.004 &    0.245 &  0.337 &           26 \\
    1000 / 1000 &         0.4 / 0.9 &   0.035 &    0.378 &  0.320 &          50* \\
    1000 / 1000 &         0.5 / 0.9 &   0.004 &    0.418 &  0.447 &          50* \\
\bottomrule
\end{tabular}}
\hfill
\resizebox{0.50\textwidth}{!}{
\begin{tabular}{ccrrrl}
\toprule
 P$1$/P$2$\_data &  P$1$/P$2$\_ratio &  Lowest &  Average &  StDev & Iter \\
\midrule
    1000 / 5000 &         0.1 / 0.9 &   0.006 &    0.202 &  0.204 &          50* \\
    1000 / 5000 &         0.2 / 0.9 &   0.007 &    0.226 &  0.332 &           38 \\
    1000 / 5000 &         0.3 / 0.9 &   0.000 &    0.345 &  0.366 &          50* \\
    1000 / 5000 &         0.4 / 0.9 &   0.008 &    0.365 &  0.334 &          50* \\
    1000 / 5000 &         0.5 / 0.9 &   0.006 &    0.329 &  0.334 &          50* \\
\bottomrule
\end{tabular}
}
\label{table:MNIST_VAE}
\end{table*}

\textbf{Additional results for multi-player.}
Figs.~\ref{fig:CaliH-multi},~\ref{fig:FaceA-multi}, and~\ref{fig:MNIST_VAE-multi} plot the Shapley values and cumulative counts for CaliH, FaceA, and MNIST\_VAE in multi-player scenarios, respectively.
It can be observed from Fig.~\ref{fig:MNIST_VAE-multi} that the convergence of the Shapley values is not very obvious, which corresponds to our previous observations for the $2$-player scenario in Fig.~\ref{fig:MNIST_VAE} where there are more fluctuations and suggests that this is a challenging setting.

\begin{figure*}
    \begin{minipage}{0.47\textwidth}
        \centering
        \includegraphics[width=0.43\textwidth]{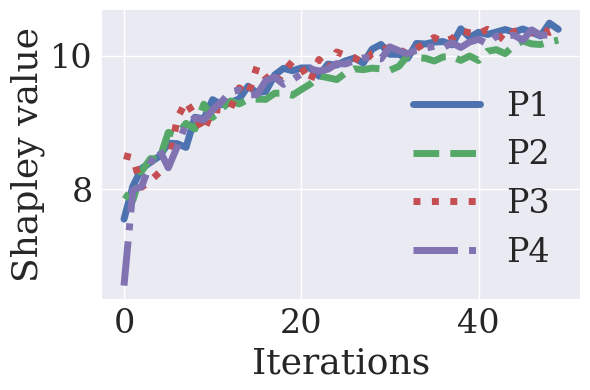}
        \hspace{0.05\textwidth}
        \includegraphics[width=0.43\textwidth]{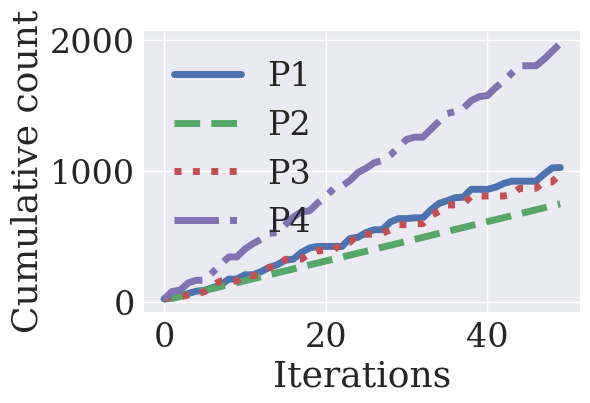}
    \caption{SV~(cumulative count) vs.~iterations on \textbf{CaliH} for a multi-player scenario where
    P$1$/P$3$\_data\ $=1000$, P$1$/P$3$\_sampling\_size\ $=500$, P$2$/P$4$\_ratio\ $=0.05$, and P$2$/P$4$\_nan\ $=0.3$.
    }
    \label{fig:CaliH-multi}
    \end{minipage}
    \hfill
    \begin{minipage}{0.47\textwidth}
        \centering
        \includegraphics[width=0.43\textwidth]{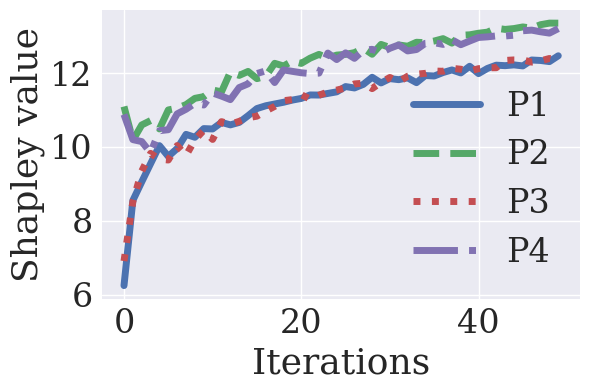}
        \hspace{0.05\textwidth}
        \includegraphics[width=0.43\textwidth]{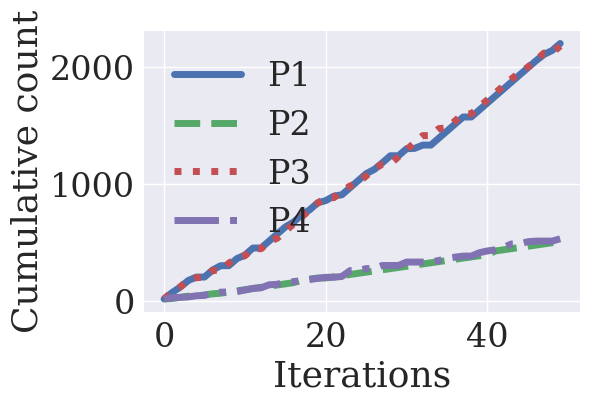}
    \caption{SV~(cumulative count) vs.~iterations on \textbf{FaceA} for a multi-player scenario where 
    P$1$/P$3$\_data\ $=5000$, P$1$/P$3$\_sampling\_size\ $=100$, P$2$/P$4$\_ratio\ $=0.1$, and P$2$/P$4$\_nan\ $=0.2$.
    }
    \label{fig:FaceA-multi}
    \end{minipage}
    \begin{minipage}{0.47\textwidth}
        \centering
        \includegraphics[width=0.43\textwidth]{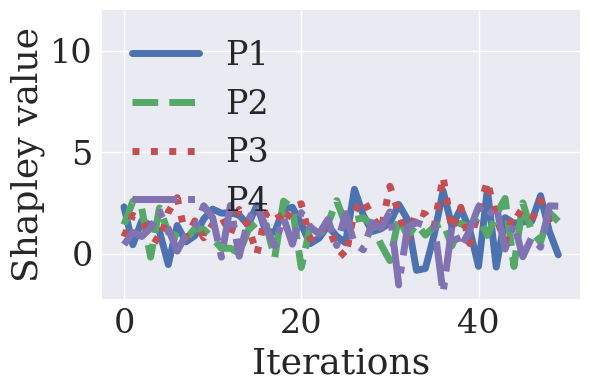}
        \hspace{0.05\textwidth}
        \includegraphics[width=0.43\textwidth]{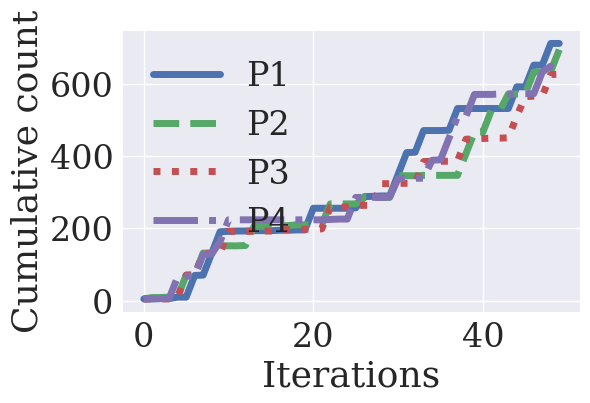}
    \caption{SV~(cumulative count) vs.~iterations on \textbf{MNIST\_VAE} for a multi-player scenario where 
    P$1$/P$3$\_data\ $=1000$, P$1$\_ratio\ $=0.1$, P$2$/P$4$\_data\ $=5000$, and P$2$\_ratio\ $=0.1$.
    }
    \label{fig:MNIST_VAE-multi}
    \end{minipage}
\end{figure*}

\end{document}